\documentclass{article}

\usepackage{microtype}
\usepackage{graphicx}
\usepackage{subcaption}
\usepackage{booktabs} 

\usepackage{hyperref}

\usepackage[accepted]{icml2026}
\usepackage{enumitem}

\usepackage{amsmath}
\usepackage{amssymb}
\usepackage{mathtools}
\usepackage{amsthm}
\usepackage{algorithm}
\usepackage{algorithmic}
\providecommand{\RETURN}{\STATE \textbf{return} }
\usepackage[capitalize,noabbrev]{cleveref}

\usepackage{amsmath}  
\usepackage{booktabs} 
\usepackage{multirow} 
\usepackage{booktabs}
\usepackage{array}
\usepackage{graphicx}
\usepackage{subcaption}
\usepackage{caption} 
\usepackage[most]{tcolorbox}
\usepackage{xcolor}
\usepackage{enumitem}
\usepackage{listings}
\usepackage{courier}

\lstdefinestyle{json}{
  basicstyle=\ttfamily\small,
  breaklines=true,
  columns=fullflexible,
  showstringspaces=false,
  frame=none,
  xleftmargin=0pt,
  aboveskip=0.5em,
  belowskip=0.0em
}

\newtcolorbox{promptbox}[2][]{%
  enhanced,
  breakable,
  colback=gray!8,
  colframe=black!60,
  boxrule=0.8pt,
  arc=4pt,
  left=10pt,right=10pt,top=10pt,bottom=10pt,
  title=\bfseries #2,
  coltitle=white,
  colbacktitle=black!70,
  attach boxed title to top left={xshift=8pt,yshift=-2mm},
  boxed title style={
    boxrule=0pt,
    arc=3pt,
    left=8pt,right=8pt,top=4pt,bottom=4pt
  },
  #1
}
\theoremstyle{plain}
\newtheorem{theorem}{Theorem}[section]

\newtheorem{lemma}[theorem]{Lemma}

\theoremstyle{definition}
\newtheorem{definition}[theorem]{Definition}

\theoremstyle{remark}

\usepackage[textsize=tiny]{todonotes}

\icmltitlerunning{Memory is Reconstructed, Not Retrieved:  Graph Memory for LLM Agents}

\begin{document}

\twocolumn[
  \icmltitle{Memory is Reconstructed, Not Retrieved:  Graph Memory for LLM Agents}



  \icmlsetsymbol{equal}{*}

  \begin{icmlauthorlist}
    \icmlauthor{Shuo Ji}{yyy}
    \icmlauthor{Yibo Li}{yyy}
    \icmlauthor{Bryan Hooi}{yyy}
  \end{icmlauthorlist}

  \icmlaffiliation{yyy}{National University of Singapore, Singapore}

  \icmlcorrespondingauthor{Yibo Li}{liyibo@u.nus.edu}

  \icmlkeywords{LLM Agents, Memory System}

  \vskip 0.3in
]



\printAffiliationsAndNotice{}  

\begin{abstract}

Despite recent progress, LLM agents still struggle with reasoning over long interaction histories.
While current memory-augmented agents rely on a static ``retrieve-then-reason'' paradigm, this rigid pipeline design prevents them from dynamically adapting memory access to intermediate evidence discovered during inference.
To bridge this gap, we propose MRAgent, a framework that combines an associative memory graph with an active reconstruction mechanism. We represent memory as a Cue–Tag–Content graph, where associative tags serve as semantic bridges connecting fine-grained cues to memory contents.
Operating on this structure, our active reconstruction mechanism integrates LLM reasoning directly into memory access, allowing the agent to iteratively explore and prune retrieval paths based on accumulated evidence. This ensures that memory retrieval is dynamically adapted to the  reasoning context while avoiding combinatorial explosion caused by unconstrained expansion.
Experiments on the \textsc{LoCoMo} benchmark  and \textsc{LongMemEval} benchmark  demonstrate significant improvements over strong baselines (up to $23\%$), while substantially reducing token and runtime cost, highlighting the effectiveness of active and associative reconstruction for long-horizon memory reasoning. The code is available at: \url{https://github.com/Ji-shuo/MRAgent}.

\end{abstract}
\section{Introduction}

LLMs exhibit a ``jagged" cognitive profile  \cite{DBLP:journals/corr/abs-2510-18212}, excelling at math and reasoning, but deficient in tasks requiring long-term memory, such as interactive assistance or decision-support systems acting over extended interactions~\cite{DBLP:journals/tmlr/GaoGHHJLLQQRWWXZZZXFZLQ26}. In such long-horizon tasks, LLMs are fundamentally constrained by their limited context windows, which restrict their ability to retain interaction history over time \cite{hatalis2023memory}.

To mitigate these limitations, prior work equips LLM agents with external memory systems. Early approaches adopt Retrieval-Augmented Generation (RAG)~\cite{DBLP:conf/nips/LewisPPPKGKLYR020}, where memory access is realized via similarity-based retrieval over unstructured text or embedding stores. Subsequent work introduces more structured memory representations, including hierarchical stores~\cite{DBLP:journals/corr/abs-2510-18866, DBLP:journals/corr/abs-2506-06326} and knowledge graphs (KGs)~\cite{DBLP:journals/corr/abs-2501-13956, DBLP:journals/corr/abs-2502-12110,DBLP:journals/corr/abs-2511-01448}, which explicitly encode entities and relations to support more interpretable and relational retrieval. 
However, retrieval in these systems remains restricted to fixed top-$k$ selection or predefined subgraph traversal, failing to infer new retrieval cues or adapt to intermediate evidence.
Under this formulation, existing memory systems operate as \emph{passive retrieval policies}.

\begin{figure}[t]
    \centering
    \includegraphics[width=0.9\linewidth]{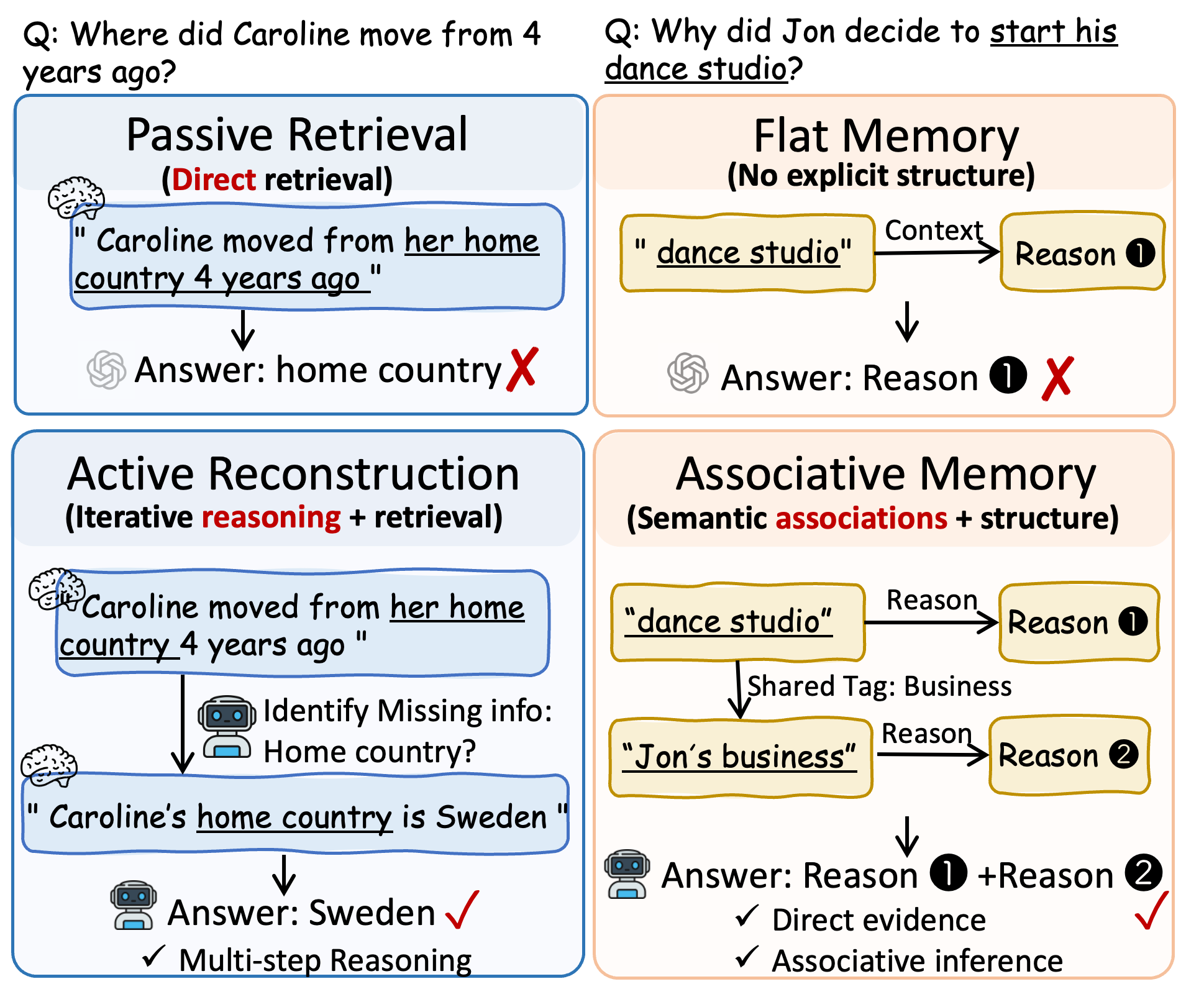}
    \caption{Comparison between passive retrieval and active memory reconstruction in MRAgent.}
    \label{fig:introduction}
\end{figure}     
\begin{figure*}[t]
    \centering
    \includegraphics[width=0.95\linewidth]{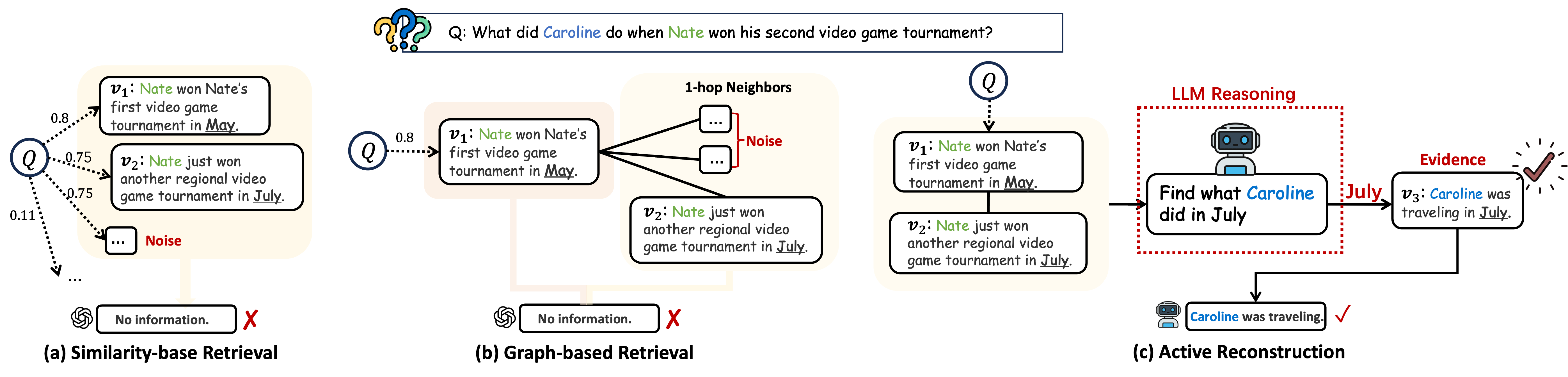}
    \caption{
Illustrative example comparing passive retrieval and active reconstruction:
passive retrieval only
retrieves memory content related to Nate’s video game
tournaments based on the query, while active reconstruction infers a critical
temporal cue (``July'') through LLM reasoning and identifies Caroline’s
corresponding activity.
}
    \label{fig:motivation}
\end{figure*}
In contrast, cognitive neuroscience conceptualizes memory retrieval as an \emph{active} and \emph{associative} reconstruction process~\cite{rugg2025cognitive}, rather than a passive read-out of stored content.  
Specifically, retrieval is initiated by contextual cues, which propagate through intermediate representations, progressively reconstructing coherent memory experiences~\cite{frankland2019neurobiological}.

This perspective highlights two key challenges for LLM-based memory systems, as illustrated in Figure~\ref{fig:introduction}.
\textbf{(Challenge 1: Active Reconstruction)} How can memory access be transformed from one-shot retrieval into an active multi-step reconstruction process that progressively reveals information across reasoning steps? \textbf{(Challenge 2: Associative Memory Structure)} How can memory be organized to capture semantic and structural dependencies, enabling guided exploration across associative items?

In this light, we propose the \textbf{M}emory \textbf{R}easoning Architecture for LLM \textbf{Agent}s (MRAgent), a framework that enables LLM agents to perform active and associative memory reconstruction by integrating LLM reasoning into memory access.
MRAgent explores multiple candidate retrieval paths over a memory graph, pruning irrelevant branches based on intermediate evidence, and iteratively optimizing for the next step with the highest information gain.
To enable controlled memory traversal, we design a Cue–Tag–Content memory graph, where tags encode associations between fine-grained cues and specific memory contents. By exposing these associations explicitly, tags allow the LLM to select retrieval paths before accessing detailed memory contents.
Our contributions can be summarized as follows:
\begin{itemize}
    \item We propose active memory reconstruction, a new paradigm that integrates memory access into the reasoning process, allowing the agent to dynamically adapt  search strategy based on intermediate evidence.
    \item We introduce a Cue–Tag–Content memory graph in which associative tags mediate retrieval between cues and content, enabling the LLM to identify promising retrieval paths while pruning irrelevant branches.
    \item We provide a theoretical analysis proving that active retrieval policies are strictly more expressive than passive retrieval.
    \item Extensive experiments demonstrate that MRAgent outperforms strong baselines with significantly improved token and runtime efficiency.
\end{itemize}

\section{Problem Setting: Active Memory Access}
\label{sec:problem-setting}

In this section, we formulate memory retrieval as a sequential decision process and analyze why the traditional passive retrieval paradigm fails on complex, multi-step queries.

\subsection{Memory Retrieval Definition}

We consider an external memory $\mathcal{M}$ consisting of memory units
$\mathcal{V}=\{v_1,\dots,v_N\}$.
Given a query $x$, memory access proceeds by sequentially selecting memory
units over $T$ steps. Let
\(
S^{(t)} = \{v^{(1)}, \ldots, v^{(t)}\}
\)
denote the accumulated evidence after step $t$. We distinguish two classes of memory access policies.

\textbf{Passive retrieval.}
A \emph{passive} (stateless) retrieval policy $\pi_{\mathrm{p}}$ selects memory units solely based on the query:
\begin{equation}
\{v^{(1)}, \ldots, v^{(T)}\} = \pi_{\mathrm{p}}(x).
\end{equation}

\textbf{Active reconstruction.}
An \emph{active} (stateful) policy $\pi_{\mathrm{a}}^{(t)}$ selects the next unit conditioned on the evolving evidence:
\begin{equation}
v^{(t)} = \pi_{\mathrm{a}}^{(t)}(x, S^{(t-1)}),
\qquad S^{(t)} = S^{(t-1)} \cup \{v^{(t)}\}.
\end{equation}
This formulation enables adaptive, multi-step interaction with memory.

\subsection{Limits of Passive Retrieval}
\label{subsec:retrieval-failure}
We categorize retrieval strategies of existing memory systems into two paradigms based on their memory organization and adaptation mechanisms, and compare them with the proposed active reconstruction paradigm in Figure~\ref{fig:motivation}.

\textbf{Similarity-based Retrieval.}
Many existing memory systems adopt similarity-based retrieval, such as MemoryBank~\cite{DBLP:conf/aaai/ZhongGGYW24} and Mem0~\cite{DBLP:journals/corr/abs-2504-19413},
where retrieval is defined by a similarity scoring function:
\begin{equation}
\label{eq:pi-sim}
\pi_{\text{sim}}(x)
=
\mathrm{TopK}\!\left(\{ \mathrm{sim}(x,v) \}_{v \in \mathcal{V}}, k \right).
\end{equation}
In this paradigm, memory serves as a static context provider and the relevance function $\mathrm{sim}(x, v)$ is fixed by the query.
As shown in Figure~\ref{fig:motivation}(a), such methods retrieve many events related to ``video game tournament'' based on surface-level relevance, thereby introducing substantial noise while failing to find the correct evidence.

\textbf{Graph-based Memory Retrieval.}
To exploit structural relations among memory units, methods such as A-Mem~\cite{DBLP:journals/corr/abs-2502-12110}, Zep~\cite{DBLP:journals/corr/abs-2501-13956} introduce graph-structured memory.
These approaches extend the retrieval mechanism by combining similarity-based seeding with predefined $N$-hop neighbor expansion:
\begin{equation}
\begin{aligned}
\mathcal{V}^{\text{sim}}
&=
\mathrm{TopK}\!\left(\{ \mathrm{sim}(x,v) \}_{v \in \mathcal{V}}, k \right), \\
\pi_{\text{graph}}(x)
&=
\mathcal{V}^{\text{sim}} \cup \mathrm{Neighbor}\!\left(\mathcal{V}^{\text{sim}}\right).
\end{aligned}
\end{equation}
where $\mathrm{Neighbor}(\cdot)$ returns the predefined $N$-hop neighbors of a node set in the memory graph.
While this strategy alleviates certain multi-hop retrieval challenges, it requires relevant evidence to be connected through explicit graph links.
Moreover, the use of fixed neighbor expansion often introduces substantial noise.
Figure~\ref{fig:motivation}(b) shows that graph-based expansion retrieves additional but irrelevant neighbors, yet still fails to recover information about Caroline, which is not directly connected to ``video game tournament'' events in the memory graph.

\textbf{Active Memory Reconstruction.}
By integrating LLM reasoning directly into memory access, active reconstruction enables the agent to infer new retrieval cues and adapt traversal strategies based on accumulated evidence.
As illustrated in Figure~\ref{fig:motivation}(c), intermediate findings can be explicitly transformed into new retrieval constraints, allowing the agent to recover evidence that is unreachable under passive policies.

\textbf{Summary.}
The fundamental limitation of passive retrieval lies in its inability to perform reasoning concurrently with memory access, leading to three  weaknesses: (i) an inability to revise strategies based on intermediate state, such as identifying `July' as a temporal anchor; (ii) the accumulation of noise due to fixed aggregation ; and (iii) heavy reliance on pre-constructed structures, limiting flexibility and scalability. A detailed analysis is included  in Appendix~\ref{app:retrieval-analysis}.

\subsection{Motivation from Cognitive Memory Systems}
\begin{figure}[t]
    \centering
    \includegraphics[width=0.9\linewidth]{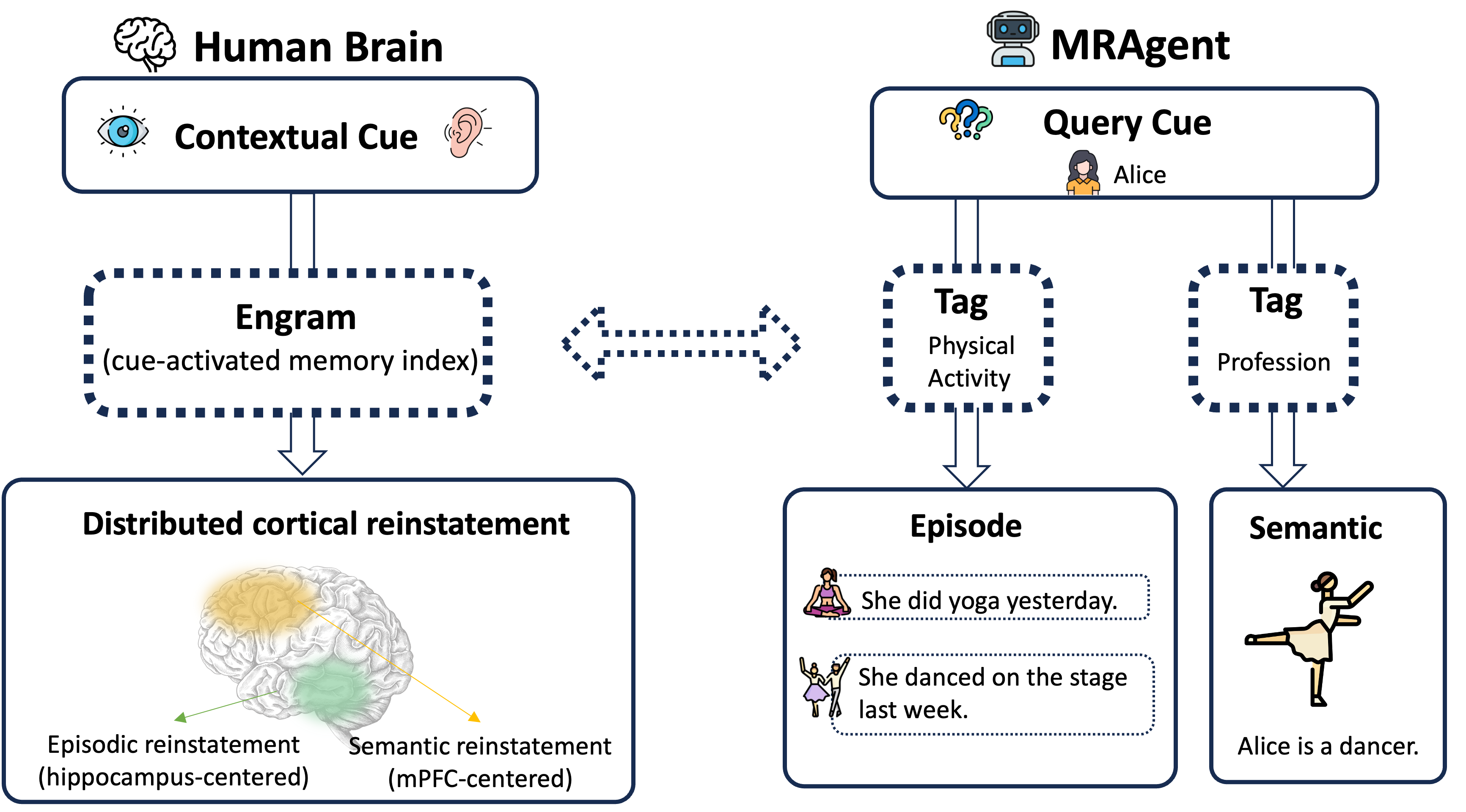}
    \caption{
    Functional correspondence between human memory reconstruction and the MRAgent architecture.
    }
    \label{fig:memory-alignment}
\end{figure}
These limitations highlight the necessity of moving beyond static retrieval toward an active memory reconstruction paradigm, which is strongly supported by
cognitive neuroscience research~\cite{rugg2025cognitive}. As illustrated in Figure~\ref{fig:memory-alignment}, studies in human memory suggest that recall unfolds sequentially: contextual cues trigger the reactivation of \emph{engrams}, compact internal memory states formed from past experiences, whose activation in turn biases and constrains subsequent recall, progressively reconstructing a coherent memory~\cite{rashid2016competition}. Motivated by this perspective, we adopt a Cue--Tag--Content architecture, where tags serve as intermediate associative structures that encode how cues are linked to memory contents and guide the memory reconstruction process. Memory content can be distinguished into episodic memory for concrete events and
semantic memory for shared concepts and knowledge, as supported by cognitive
neuroscience~\cite{manns2003semantic}.

\section{Associative Memory System}
\begin{figure*}[t]
  \centering
  \includegraphics[width=0.95\textwidth]{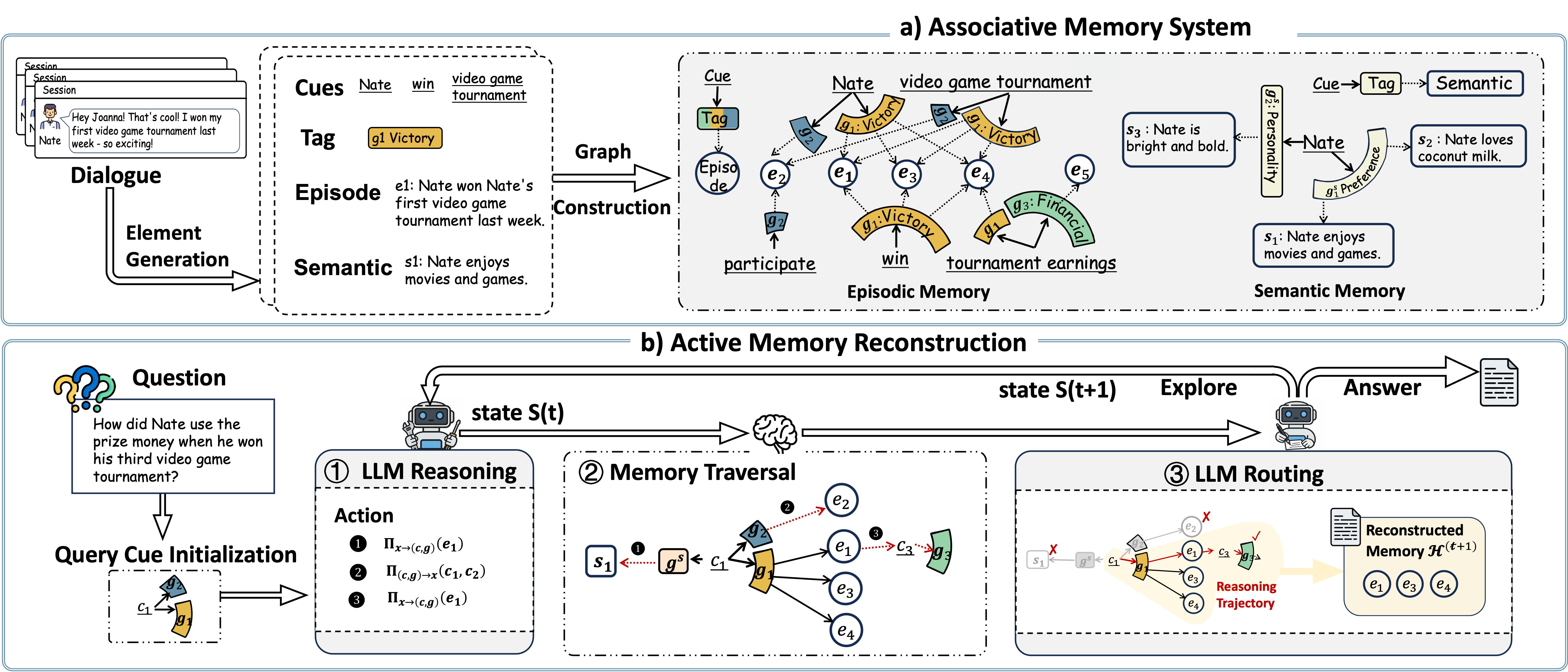}
  \caption{MRAgent: An Associative Memory System with LLM-Driven Active Memory Reconstruction.
(a) MRAgent constructs an associative memory system from dialogues, organizing episodic and semantic memories through Cue–Tag–Content structures that explicitly encode semantic and relational associations through tags.
(b) Upon a query, the agent performs active memory reconstruction, where an LLM iteratively reasons over cues and tags to traverse the memory graph and selectively reconstruct task-relevant memory.}
  \label{fig:framework}
\end{figure*}

To enable the active reconstruction paradigm described in Section~\ref{sec:problem-setting}, we organize the agent's memory as a heterogeneous graph. This section details the core \emph{Cue--Tag--Content} architecture and its organization into multi-granular layers.

\label{sec:MemorySystem}

\subsection{Cue--Tag--Content Associative Memory}
Motivated by the need for active memory reconstruction in Section~\ref{sec:problem-setting}, we organize memory as a structured associative graph rather than a flat collection of retrievable items. 

As illustrated in Figure~\ref{fig:framework} (a), the memory construction pipeline consists of two phases: element generation and graph construction. The first phase employs LLM to extract and generate memory elements from dialogs. Then, these elements are utilized to construct a memory graph.

\textbf{Unified Memory System.}
We model the memory system as a heterogeneous graph $\mathcal{M} = (\mathcal{C}, \mathcal{V}, \mathcal{R})$. The graph contains two primary categories of nodes:
(1) \textbf{Cues} $c \in \mathcal{C}$ are fine-grained keywords such as entities or attributes. 
(2) \textbf{Contents} $v \in \mathcal{V}$ store specific memory items. 

Associations between cues and contents are encoded as typed relations:
$\mathcal{R} \subseteq \mathcal{C} \times \mathcal{G} \times \mathcal{V}$,
where each triple $(c, g, v) \in \mathcal{R}$ connects a cue $c$ to a content node $v$ through a relation attribute $g$, referred to as a \textbf{Tag}.

\textbf{Associative Tags.} 
Tags are utilized to summarize associative relations between fine-grained cues and content units.
Based on these associations, the LLM performs a \emph{two-stage retrieval} process, where it first selects a small set of relevant tags and then retrieves content conditioned on the selected tags.
In large and complex memory graphs, naively expanding $n$-hop neighbors from content nodes often leads to combinatorial explosion and the inclusion of substantial irrelevant memories.
By introducing tags as explicit associative intermediates, MRAgent enables guided and flexible reasoning over the memory graph.
Tags provide semantic guidance, allowing the agent to evaluate and prune traversal branches to avoid incurring the computational cost of processing full episodic content.

To realize this two-stage retrieval process, we define two induced mapping operators over the memory relation $\mathcal{R}$.
Specifically, $\phi_{c \rightarrow g}$ activates candidate associative tags from a given cue, while $\phi_{(c,g) \rightarrow v}$ retrieves content conditioned on both the cue and a selected tag:
\begin{equation}
\begin{aligned}
\phi_{c \rightarrow g}(c)
&\triangleq \{ g \mid (c,g,\cdot) \in \mathcal{R} \}, \\
\phi_{(c,g) \rightarrow v}(c,g)
&\triangleq \{ v \mid (c,g,v) \in \mathcal{R} \}.
\label{eq:phi}
\end{aligned}
\end{equation}
Together, these operators decouple associative reasoning from content-level retrieval, making selective and reconstructive memory access tractable in large-scale graphs.

\subsection{Multi-Granular Memory Layers}
\label{subsec:layers}

Inspired by human memory, we organize memory contents into complementary types spanning concrete events, stable knowledge, and higher-level abstractions. Episodic memory preserves event-specific information grounded in particular contexts, semantic memory captures abstract and relatively stable knowledge distilled across events, and topic-level abstractions summarize recurring patterns shared across episodes. Based on this distinction, we organize the memory graph into multiple functional layers, each supporting different forms of reasoning and retrieval.

\textbf{Episodic Layer (Cue--Tag--Episode).}
The episodic layer stores event-specific memory units $e_i \in \mathcal{V}^e$, each corresponding to a concrete experience occurring at a particular time. Episodes are retrievable through a set of fine-grained cues $\mathcal{C}_i \subseteq \mathcal{C}$
such as entities, actions, or contextual keywords and are routed by tags $g_i$ that summarize associative relations between cues and episodic content.
To support temporal reasoning, episodic memories are further organized along a unified timeline,
allowing temporal constraints to be imposed during reconstruction.

\textbf{Semantic Layer (Cue--Tag--Semantic).}
The semantic layer captures relatively stable knowledge $s_i \in \mathcal{V}^s$, such as personal attributes, preferences, and general facts extracted from raw dialogue content.
Each semantic node is anchored to a cue $c_i$, while the tag encode aspect-level associations of that cue, such as personality traits, long-term preferences, or factual attributes. This design enables direct access to targeted semantic information without requiring retrieval over potentially long episodic histories.

\textbf{Abstraction Layer (Topics).}
The abstraction layer stores topic nodes $\tau \in \mathcal{V}^\tau$, each
summarizing recurring patterns shared across a coherent set of episodes. Topics
are connected to their constituent episodes and serve as a computational
abstraction that supports efficient top-down transitions $\phi_{\tau \to e}$,
allowing the agent to first localize a relevant topic and then descend to the
associated episodes.

Together, these multi-granular memory layers allow MRAgent to flexibly combine
event-level evidence from episodic memory, abstract knowledge from semantic
memory, and higher-level structure from topic abstractions, supporting both
contextualized and conceptual reasoning.

\subsection{Memory Population via LLM Distillation}

The memory system is populated through an automated distillation pipeline
applied to the input stream $T$. The stream is first segmented into episodic
units $e_i$, each corresponding to a coherent event grounded in a specific
context. For every episodic unit, tags and cues are extracted using LLM-based
components:
\begin{equation}
    g_i = F_{\text{LLM}}^{\text{tag}}(e_i), \qquad
    C_i = F_{\text{LLM}}^{\text{cue}}(e_i),
\end{equation}
where $F_{\text{LLM}}^{\text{tag}}$ produces a short associative tag $g_i$ that
summarizes the relational pattern of the episode, and
$F_{\text{LLM}}^{\text{cue}}$ extracts a set of fine-grained cues $C_i$ such as
entities, attributes, and salient descriptors. Each cue in $C_i$ is then linked
to the episodic unit $e_i$ through the tag $g_i$, forming the Cue--Tag--Episode
relations that constitute the episodic layer of the memory graph.
Semantic units are extracted in a similar manner, yielding stable, abstract
knowledge that persists across individual episodes and is anchored to
entity-level cues through aspect-level tags. 

To support reasoning at a coarser
granularity, topic nodes are generated by summarizing the shared themes of
related episodes, and each topic node is connected to its constituent episodes.
This layered distillation organizes the input stream into a multi-granular
associative structure, allowing the agent to access memory at the level of
concrete events, stable facts, or higher-level topics as required during
reconstruction.
Additional details are provided in Appendix~\ref{app:memory_system}.

\section{MRAgent: Reconstructive Memory Agent}


In this section, we present \textbf{MRAgent}, which formulates memory access as an active reconstruction process. We first define the reconstruction state and traversal actions (Section~\ref{subsec:reconstructive}), and then describe the memory reconstruction process in detail (Section~\ref{subsec:active-reconstruction}).
Finally, we provide a theoretical analysis of its expressivity (Section~\ref{subsec:theory}).

\subsection{Reconstructive Memory Framework}
\label{subsec:reconstructive}

Rather than executing a predefined retrieval pipeline, MRAgent  performs active reconstruction within a structured memory system. This process is formalized through an explicit reconstruction state and a set of traversal actions, which collectively characterize possible reconstruction trajectories.  

\paragraph{Reconstruction State.} MRAgent maintains an explicit reconstruction state that guides the selection of
subsequent traversal directions during memory reconstruction.
At step $t$, the reconstruction state is defined as
\begin{equation}
\mathcal{S}^{(t)} = (\mathcal{Z}^{(t)}, \mathcal{H}^{(t)}),
\end{equation}
where $\mathcal{Z}^{(t)}$ denotes the \emph{active set} of memory elements (including cues, tags and contents), serving as the candidates for the next traversal step, and $\mathcal{H}^{(t)}$ denotes the \emph{reconstructed context} consisting of evidence accumulated in previous steps, which conditions subsequent traversal directions.

\paragraph{Traversal Actions.}
We define a finite set of traversal actions
$\mathcal{A}=\{\Pi_1,\ldots,\Pi_m\}$ to specify how the LLM explores the structured
memory graph during reconstruction.
Each traversal action is induced by a predefined mapping operator $\phi$
introduced in Eq.~\eqref{eq:phi}, including
Cue$\rightarrow$Tag, (Cue,Tag)$\rightarrow$Content, and Content$\rightarrow$(Cue,Tag).

\emph{Forward traversal} actions expands the active set along Cue--Tag--Content relations, enabling the agent to retrieve new memory contents conditioned on the current cues and tags. Specifically, $\Pi_{c\rightarrow g}$ activates associative tags from a given set
of cues $\mathcal{C}^{(t)}$, and $\Pi_{(c,g)\rightarrow v}$ retrieves memory
content jointly conditioned on selected cues and tags
$\big(\mathcal{C}^{(t)}, \mathcal{G}^{(t)}\big)$:
\begin{equation}
\label{eq:traversal-actions}
\begin{aligned}
\Pi_{c\rightarrow g}\!\big(\mathcal{C}^{(t)}\big)
&\triangleq
\bigcup_{c'\in \mathcal{C}^{(t)}} \phi_{c\rightarrow g}(c'), \\
\Pi_{(c,g)\rightarrow v}\!\big(\mathcal{C}^{(t)}, \mathcal{G}^{(t)}\big)
&\triangleq
\bigcup_{c'\in \mathcal{C}^{(t)}}\ \bigcup_{g'\in \mathcal{G}^{(t)}} \phi_{(c,g)\rightarrow v}(c',g').
\end{aligned}
\end{equation}
\emph{Reverse traversal} actions allows retrieved content to activate new cues and tags for subsequent exploration, enabling the agent to refine or redirect its
reconstruction trajectory based on intermediate evidence.
Formally, $\Pi_{v\rightarrow (c,g)}$ identifies cues and tags associated with the retrieved content:
\begin{equation}
\Pi_{v\rightarrow (c,g)}\!\big(\mathcal{V}^{(t)}\big)
\triangleq
\{(c',g')
\mid
\exists v'\in \mathcal{V}^{(t)},\ (c',g',v')\in \mathcal{R}\},
\end{equation}
Together, these forward and reverse traversal actions define a compositional action space that allows the LLM to expand and refine memory traversal during reconstruction.

\subsection{Memory Reconstruction Process}
\label{subsec:active-reconstruction}
With the reconstruction state and traversal actions defined, we detail the memory reconstruction process, which iteratively explores relevant memory contents through state updates and traversal actions.
Given a query, MRAgent first extracts a set of fine-grained cues and matches them against the stored cue set, obtaining the initial active set $\mathcal{Z}^{(0)}$ and the initial reconstruction state
\(
\mathcal{S}^{(0)} = (\mathcal{Z}^{(0)}, \varnothing).
\)
Starting from this state, MRAgent starts an iterative reconstruction loop consisting of LLM reasoning, controlled memory traversal, and LLM-guided routing.

\paragraph{LLM Reasoning and Action Selection.}
Conditioned on reconstruction state, the LLM reasons over the query $x$, the accumulated context $ \mathcal{H}^{(t)}$ to select a set of traversal actions $\mathcal{A}^{(t)} \subseteq \mathcal{A}$ based on current active set $\mathcal{Z}^{(t)}$:
\begin{equation}
\mathcal{A}^{(t)} = f_{\text{select}}\!\big( x, \mathcal{H}^{(t)}, \mathcal{Z}^{(t)}\big),
\end{equation}
where $f_{\text{select}}$ denotes the LLM-based action-selection function that selects promising directions for expansion, thereby reducing noise.
Furthermore, conditioning on the accumulated evidence $\mathcal{H}^{(t)}$
enables the agent to discover new cues and dynamically adjust its reasoning
trajectory. 

\paragraph{Controlled Memory Traversal.}
For each selected traversal action $\Pi_a \in \mathcal{A}^{(t)}$, the memory system executes the corresponding traversal operator to generate candidate  nodes:
\begin{equation}
\widetilde{\mathcal{Z}}^{(t+1)} =
\bigcup_{a \in \mathcal{A}^{(t)}} \Pi_a\!\big(\mathcal{Z}^{(t)}\big),
\end{equation}
where $\widetilde{\mathcal{Z}}^{(t+1)}$ is the new candidate set. This step expands the traversal trajectories, guided by LLM-selected actions
rather than exhaustive graph expansion.

\paragraph{LLM Routing and State Update.}
Based on the generated candidates, the LLM selects the most relevant content and prunes irrelevant branches to ensure the accumulated context remains concise and focused.
Formally, the next active set and reconstruction state are updated as follows:
\begin{equation}
\begin{aligned}
\mathcal{Z}^{(t+1)}
&=
f_{\text{route}}\!\big(
x, \mathcal{H}^{(t)}, \widetilde{\mathcal{Z}}^{(t+1)}
\big), \\
\mathcal{H}^{(t+1)}
&=
\mathcal{H}^{(t)} \cup \mathcal{Z}^{(t+1)} ,
\end{aligned}
\end{equation}
where $f_{\text{route}}$ is the LLM-based routing function that evaluates semantic associations between the query, the reconstructed context, and newly retrieved memory contents.
Unlike surface-level matching or predefined traversal, this routing step  incorporates semantic associations and structural relations exposed by the memory graph. Following the state update, the accumulated context $\mathcal{H}^{(t+1)}$ is evaluated by $\mathcal{C}_{\text{LLM}}$ to determine if the evidence is sufficient to answer the query or if further exploration is required.

Through this process, MRAgent integrates LLM reasoning directly into the multi-turn memory reconstruction. 
Selective evidence expansion reduces noise and improves efficiency, while
conditioning on intermediate evidence enables flexible adjustment of the
reasoning trajectory.
Implementation details and algorithms are provided in Appendix~\ref{app:reconstruction}.

\subsection{Theoretical Analysis: Active vs. Passive Retrieval}
\label{subsec:theory}
MRAgent is designed as an \emph{active} retrieval approach over a graph-based memory. In this section, we formalize the advantage of such active retrieval over passive retrieval, from an approximation-theoretic view. 
For generality and notational simplicity, our theory generalizes from the (cue, tag, episode) formulation in MRAgent to arbitrary heterogeneous graphs with textual information on each node.

Given retrieval budget \(T\), recall that \emph{active reconstruction} adaptively chooses the next node to retrieve based on what it has already retrieved, while a
\emph{passive} retriever must commit to all \(T\) retrievals upfront as a function of the query alone.

These retrieval strategies induce different \emph{hypothesis classes} (sets of input--output mappings). Specifically, let
\(\mathcal{H}^{\mathrm{LM}}_{\mathrm{active}}(T)\) contain all predictors implementable by an LM that can make \(T\)
adaptive retrieval calls to the graph, and let \(\mathcal{H}^{\mathrm{LM}}_{\mathrm{passive}}(T)\) contain those implementable when the \(T\)
retrieval calls are fixed in advance (non-adaptive). Then our main result is:

\begin{theorem}[Active retrieval is strictly more powerful than passive retrieval]
For any retrieval budget \(T\ge 2\), the passive hypothesis class is strictly contained in the active hypothesis class:
\[
\mathcal{H}^{\mathrm{LM}}_{\mathrm{passive}}(T)\ \subsetneq\ \mathcal{H}^{\mathrm{LM}}_{\mathrm{active}}(T).
\]
\end{theorem}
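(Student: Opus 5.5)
The proof has two parts: inclusion and strictness. Inclusion is a simulation argument. Strictness needs a single separating family of graphs and queries, together with a counting or indistinguishability argument showing that no passive predictor with budget \(T\) can compute the target map.

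\textbf{Step 1 (inclusion).} Let \(h\in\mathcal{H}^{\mathrm{LM}}_{\mathrm{passive}}(T)\). It commits to retrievals \(\{v^{(1)},\dots,v^{(T)}\}=\pi_{\mathrm{p}}(x)\) and then applies an LM to \(x\) and the retrieved texts. An active policy can reproduce this by choosing \(\pi_{\mathrm{a}}^{(t)}(x,S^{(t-1)})\triangleq v^{(t)}\), i.e.\ by ignoring \(S^{(t-1)}\) at every step. It then applies the same final LM, so \(h\in\mathcal{H}^{\mathrm{LM}}_{\mathrm{active}}(T)\). This is immediate from the definitions of passive retrieval and active reconstruction in Section~\ref{sec:problem-setting}, since a stateless policy is a special case of a stateful one.

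\textbf{Step 2 (separating construction).} I would use a pointer-chasing instance. Take a query node \(q\) and \(n\) candidate nodes \(u_1,\dots,u_n\), where \(n\) is large relative to \(T\). The graph is fixed, but the node texts vary. The text of \(q\) is independent of the query string \(x\) and contains an index \(i\in[n]\), i.e.\ a pointer. Each \(u_j\) carries a bit \(b_j\). The target map is \(f(x,\mathcal{M})=b_i\), the bit stored at the node \(q\) points to. This mirrors the ``July'' example in Figure~\ref{fig:motivation}: the relevant cue is revealed only inside retrieved content.

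The active predictor retrieves \(q\) at step 1, reads \(i\), retrieves \(u_i\) at step 2, and outputs \(b_i\). This uses 2 calls, and the remaining calls are padding, so \(f\in\mathcal{H}^{\mathrm{LM}}_{\mathrm{active}}(T)\) for every \(T\ge 2\). This is the only place \(T\ge2\) is used.

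\textbf{Step 3 (passive impossibility).} The query \(x\) carries no information about \(i\), so a passive policy selects one fixed set \(\pi_{\mathrm{p}}(x)\) of at most \(T\) nodes for all memory instances sharing that query. Choose \(n>T\) and pick an index \(i\) such that \(u_i\notin\pi_{\mathrm{p}}(x)\). Then build two memories that agree on every retrieved node, including \(q\)'s pointer to \(i\), but differ in \(b_i\). The LM's input is identical in both, so its output is identical, while \(f\) differs between them. Hence \(f\notin\mathcal{H}^{\mathrm{LM}}_{\mathrm{passive}}(T)\).

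\textbf{Main obstacle.} The hard part is fixing the formal model so that Step 3 is airtight. First, the LM's final output must depend only on \(x\) and the retrieved node texts; otherwise the LM could memorize the graph and the separation would collapse. Second, the graph must stay fixed across instances, so that neither the graph structure nor the query leaks the pointer. I would make both explicit as assumptions: the hypothesis class is a function of the retrieved transcript only, and memory instances vary only in node text.

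With those assumptions the indistinguishability argument is one line. The remaining care goes into checking that the active construction stays within budget \(T\) for each \(T\ge 2\).
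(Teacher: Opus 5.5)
Your proposal is correct. The inclusion step is the paper's own: a passive policy is an active one whose query functions ignore the transcript. Your strictness argument, however, takes a different route.

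The paper builds a complete binary tree of depth $d=T-1$ whose on-path payloads encode the next bit of a uniformly random root-to-leaf path. It shows an active policy reaches the target leaf with exactly $T=d+1$ calls and has zero risk, and it lower-bounds every passive policy's risk by $\varepsilon_Y(1-T/2^{d})$. It then concludes because equal classes would have equal $\mathrm{opt}(\cdot;D)$. You instead use a one-hop pointer into $n>T$ candidates and a deterministic two-memory indistinguishability argument, which shows directly that the active predictor coincides with no passive one.

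Your route has several advantages. It is more elementary and needs no nondegenerate prior; the paper silently requires $\varepsilon_Y>0$. It uses only two adaptive calls. Most importantly, it covers $T=2$. There the paper's bound $\varepsilon_Y(1-2/2^{1})$ is zero, since with $d=1$ a passive policy can simply fetch both leaves. So the paper's argument as written only yields strictness for $T\ge 3$, and enlarging the fan-out as you do is the natural repair. It also gives a quantitative gap of $\tfrac12(1-T/n)$ for uniform $i$ and a fair bit.

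What the paper's tree buys in exchange is a depth-versus-budget statement under bounded branching. With fan-out $2$ and active moves that only follow out-edges, a passive retriever needs budget exponential in the number of dependent hops. Your construction instead relies on two things:
\begin{itemize}
\item directly addressing $u_i$, which is legitimate because $Q_{\theta,t}$ may output any node of $\mathcal{V}(M)$;
\item the out-lists returned by $\mathrm{Retrieve}^M$ being identical across your two memories, which your fixed-graph assumption secures.
\end{itemize}
When you write it up, state the argument in the appendix formalism $H_\theta(x,z^{(1)},\dots,z^{(T)})$, and take the target to be the active predictor $\pi^{\mathrm{act}}_\theta$ itself, since that is what the theorem quantifies over.
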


Intuitively, LMs with active retrieval can learn any function that LMs with passive retrieval can, but not vice versa. The full statement and proofs are given in Appendix~\ref{app:active-vs-passive-separation}.

\section{Experiments}
\begin{table*}[t]
\centering
\small
\setlength{\tabcolsep}{6pt}
\caption{Performance across different question types on \textsc{LoCoMo}. Evaluation metrics include F1 score (F1), and LLM-Judge score (J).
All values are reported as percentages. \textbf{Bold} indicates the best result, and \underline{underline} indicates the second best.}
\begin{tabular}{ll|cc|cc|cc|cc|c}
\toprule
Model & Method
& \multicolumn{2}{c|}{Multi-hop}
& \multicolumn{2}{c|}{Temporal}
& \multicolumn{2}{c|}{Open Domain}
& \multicolumn{2}{c|}{Single hop}
& Overall \\
&
& F$_1$ $\uparrow$ & J $\uparrow$
& F$_1$ $\uparrow$ & J $\uparrow$
& F$_1$ $\uparrow$ & J $\uparrow$
& F$_1$ $\uparrow$ & J $\uparrow$
& J $\uparrow$ \\
\midrule

\multirow{6}{*}{Gemini}
& RAG
& 34.89 & 58.16$_{\scriptsize \pm 0.45}$
& 43.52 & 49.22$_{\scriptsize \pm 0.15}$
& 25.68 & 41.67$_{\scriptsize \pm 0.47}$
& 53.69 & 69.20$_{\scriptsize \pm 0.05}$
& 61.30 \\
& A-Mem
& 33.81 & 53.54$_{\scriptsize \pm 0.33}$
& 40.22 & 49.53$_{\scriptsize \pm 0.44}$
& 12.49 & 33.33$_{\scriptsize \pm 0.49}$
& 46.39 & 61.83$_{\scriptsize \pm 0.10}$
& 55.97 \\
& MemoryOS
& 41.42 & 63.82$_{\scriptsize \pm 0.44}$
& 35.91 & 47.04$_{\scriptsize \pm 0.53}$
& 23.43 & 41.66$_{\scriptsize \pm 0.85}$
& \underline{54.82} & 71.90$_{\scriptsize \pm 0.22}$
& 63.35 \\
& LangMem
& 40.67 & 61.34$_{\scriptsize \pm 0.79}$
& 44.70 & 53.58$_{\scriptsize \pm 0.25}$
& 20.49 & 38.54$_{\scriptsize \pm 0.33}$
& 48.20 & 69.68$_{\scriptsize \pm 0.12}$
& 62.86 \\
& Mem0
& \textbf{45.17} & \underline{68.79$_{\scriptsize \pm 1.21}$}
& \underline{58.19} & \underline{61.68$_{\scriptsize \pm 0.29}$}
& \underline{26.24} & \underline{41.66$_{\scriptsize \pm 1.63}$}
& 54.37 & \underline{73.72$_{\scriptsize \pm 0.10}$}
& \underline{68.31} \\
& \textbf{MRAgent}
& \underline{43.69} & \textbf{75.17$_{\scriptsize \pm 0.33}$}
& \textbf{67.66} & \textbf{80.37$_{\scriptsize \pm 0.15}$}
& \textbf{32.51} & \textbf{68.75$_{\scriptsize \pm 0.98}$}
& \textbf{64.08} & \textbf{90.48$_{\scriptsize \pm 0.10}$}
& \textbf{84.21} \\

\midrule
\multirow{5}{*}{Claude}
& RAG
& 34.53 & 57.45$_{\scriptsize \pm 0.59}$
& 43.39 & 48.29$_{\scriptsize \pm 0.15}$
& 26.56 & 43.75$_{\scriptsize \pm 0.34}$
& 53.66 & 69.20$_{\scriptsize \pm 0.06}$
& 61.10 \\
& A-Mem
& 42.45 & 71.67$_{\scriptsize \pm 0.22}$
& 47.73 & 55.48$_{\scriptsize \pm 0.28}$
& 22.02 & 47.57$_{\scriptsize \pm 0.46}$
& 55.19 & 74.71$_{\scriptsize \pm 0.05}$
& 68.45 \\
& MemoryOS
& 32.94 & 60.99$_{\scriptsize \pm 0.44}$
& 39.14 & 51.09$_{\scriptsize \pm 0.29}$
& 18.29 & 48.95$_{\scriptsize \pm 0.15}$
& 45.46 & 66.49$_{\scriptsize \pm 0.21}$
& 61.18 \\
& LangMem
& 44.37 & 70.92$_{\scriptsize \pm 0.25}$
& \underline{56.64} & \underline{80.68$_{\scriptsize \pm 0.36}$}
& 22.66 & 54.71$_{\scriptsize \pm 0.55}$
& 54.36 & \underline{83.12$_{\scriptsize \pm 0.15}$}
& \underline{78.61} \\
& Mem0
& \underline{48.66} & \underline{75.88$_{\scriptsize \pm 0.67}$}
& 49.50 & 53.58$_{\scriptsize \pm 0.44}$
& \underline{28.58} & \underline{56.25$_{\scriptsize \pm 0.49}$}
& 54.43 & 74.07$_{\scriptsize \pm 0.06}$
& 69.02 \\
& \textbf{MRAgent}
& \textbf{56.72} & \textbf{90.19$_{\scriptsize \pm 0.29}$}
& \textbf{69.82} & \textbf{85.34$_{\scriptsize \pm 0.25}$}
& \textbf{34.67} & \textbf{71.57$_{\scriptsize \pm 0.10}$}
& \textbf{68.62} & \textbf{91.10$_{\scriptsize \pm 0.15}$}
& \textbf{88.32} \\

\bottomrule
\end{tabular}
\label{tab:reasoning_results}
\end{table*}

In this section, we conduct comprehensive experiments to evaluate MRAgent and answer the following research questions:
(RQ1) How does MRAgent perform compared to existing memory systems?
(RQ2) How does MRAgent compare to baselines in terms of computational cost?
(RQ3) How do different components of MRAgent contribute to overall performance?
(RQ4) How does multi-turn reasoning progressively improve memory reconstruction?
(RQ5) How does MRAgent behave in real conversational scenarios?

\subsection{Experiment Setup}
\textbf{Benchmarks.} 
We evaluate MRAgent on two widely used benchmarks for long-context memory evaluation:
(1) \emph{LoCoMo}~\cite{DBLP:conf/acl/MaharanaLTBBF24}, which focuses on long conversational memory understanding, and 
(2) \emph{LongMemEval}~\cite{DBLP:conf/iclr/WuWYZCY25}, which is designed to assess long-term memory system across multiple sessions with longer interaction histories per query.

\textbf{Baselines.} 
We evaluate MRAgent against representative memory-augmented baselines: Retrieval-augmented generation (RAG), \emph{LangMem}~\cite{langmem2025}, \emph{A-Mem}~\cite{DBLP:journals/corr/abs-2502-12110}, \emph{MemoryOS}~\cite{DBLP:journals/corr/abs-2506-06326}, and \emph{Mem0}~\cite{DBLP:journals/corr/abs-2504-19413}.

\textbf{Implementation.} 
We evaluate all methods using two LLM backbones: Gemini-2.5-Flash and Claude-Sonnet-4.5. 
Following prior work, we report F$_1$ and LLM-Judge (J) scores using GPT-4o-mini, and additionally report evidence recall (Recall) for analysis.

Additional details are reported in the Appendix~\ref{app:experiments}.

\subsection{Main Results (RQ1)}



\begin{table}[t]
\centering
\small
\setlength{\tabcolsep}{5pt}
\caption{Performance on \textsc{LongMemEval} evaluated by LLM-Judge~$\uparrow$.
Best and second-best results among Gemini-backbone methods are marked in
\textbf{bold} and \underline{underline}, respectively.
Mul.: multi-session; Sgl.: single-session-user; Tmp.: temporal-reasoning;
Pref.: single-session-preference.
MRAgent$^{*}$ uses Claude for retrieval while memories are constructed by Gemini.
The complete table is provided in Appendix~\ref{app:longmem}.}
\begin{tabular}{lcccc|c}
\toprule
Method & Mul. & Sgl. & Tmp. & Pref. & Overall \\
\midrule
RAG       
& 54.89 
& 85.71 
& 42.86
& 33.33 
& 54.65 \\

A-Mem     
& 42.85 
& \underline{90.00} 
& 45.11  
& 46.43 
& 52.98 \\

MemoryOS 
& \underline{56.39} 
& 87.14 
& 38.35 
& \underline{46.67} 
& \underline{54.92} \\

LangMem  
& 52.63 
& 78.57 
& \underline{45.71} 
& 36.67 
& 53.77 \\

Mem0
& 50.38
& 78.57
& 45.11
& 40.00
& 53.01 \\

MRAgent  
& \textbf{68.42} 
& \textbf{92.85} 
& \textbf{68.42} 
& \textbf{66.67} 
& \textbf{72.95} \\

\midrule
MRAgent$^{*}$  
& 86.46 
& 92.85  
& 85.71 
& 78.57 
& 86.76 \\
\bottomrule
\end{tabular}
\label{tab:longmem}
\end{table}

\textbf{MRAgent consistently outperforms all baselines across question types and backbones.}
As shown in Table~\ref{tab:reasoning_results}, MRAgent improves the overall $J$ score from $68.31$ to $84.21$ under the Gemini backbone (an $23.3\%$ relative gain), and achieves a $12.4\%$ improvement under the Claude backbone.

These gains can be attributed to two key factors.
\textit{First}, the Cue–Tag–Content (CTC) memory design explicitly encodes
associative relations via tags, integrating semantic information into the
memory graph structure.
Compared with graph-based retrieval methods that expand purely along predefined
structural links, the CTC memory allows MRAgent to select retrieval directions
based on semantic relevance, thereby improving its ability to locate supporting
evidence.
\textit{Second}, MRAgent integrates LLM reasoning into multi-turn memory access,
allowing retrieval directions to adapt to accumulated evidence and to
progressively form coherent reasoning chains.
This adaptive reconstruction process enables the agent to focus on relevant
evidence and follow evidence-guided retrieval paths, leading to
superior performance over passive retrieval baselines.

Table~\ref{tab:longmem} further shows consistent improvements on \textsc{LongMemEval}, achieving a relative improvement of $32\%$ over the strongest baseline.

\subsection{Cost Analysis (RQ2)}


\begin{table}[t]
\centering
\small
\setlength{\tabcolsep}{6pt}
\caption{Per-sample token consumption and runtime on long-term
\textsc{LongMemEval} across different memory systems using the Gemini backbone.
Results include both memory construction and retrieval.
\textbf{Bold} indicates the best result, and \underline{underline} indicates the
second best.}
\begin{tabular}{lcc}
\toprule
Method & Token Consumption & Runtime(s) \\
\midrule
A-Mem     & 632k              & 1,122.23 \\
MemoryOS & 273k              & 3,135.54 \\
LangMem  & 3,268k             & 1,209.57 \\
Mem0     & \underline{245k}  & \textbf{533.29} \\
\textbf{MRAgent} & \textbf{118k} & \underline{586.11} \\
\bottomrule
\label{tab:token-consumption}
\end{tabular}
\end{table}

Table~\ref{tab:token-consumption} reports the total token consumption and runtime per sample on \textsc{LongMemEval}, which is designed for long-term memory assess across multiple dialogue sessions.

\textbf{MRAgent improves information efficiency through selective, on-demand memory access.}
MRAgent reduces prompt tokens to 118k, a significant decrease from baselines like A-Mem (632k). Unlike existing methods that repeatedly summarize history and analyze intricate dependencies during construction, MRAgent maintains a lightweight construction phase and defers complex relation-building to the retrieval stage, where it is performed in a query-specific manner. Moreover, by leveraging associative tags to semantically guide retrieval directions, the agent can prune irrelevant paths before accessing expensive episodic content. This ``on-demand'' approach ensures that computational resources are focused strictly on query-relevant evidence.

\subsection{Ablation Study (RQ3)}

\begin{figure}[t]
  \centering
  \includegraphics[width=\linewidth]{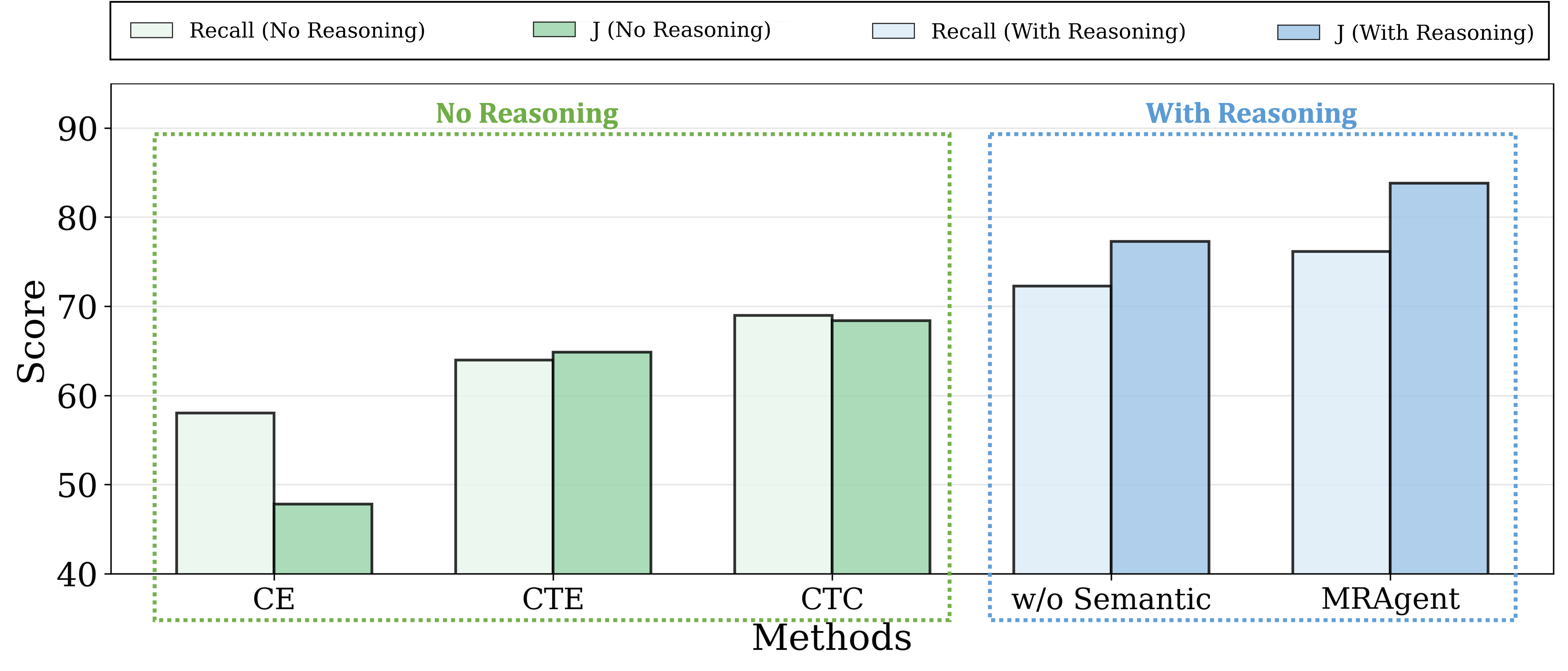}
  \caption{Ablation results on \textsc{LoCoMo} for multi-hop queries, evaluated using Recall and LLM-Judge (J) under Claude backbone.}
  \label{fig:ablation}
\end{figure}
\label{sec:ablation}

Figure~\ref{fig:ablation} presents ablation results on multi-hop questions in \textsc{LoCoMo}, isolating the contributions of memory structure and reasoning mechanisms.
We consider three structural variants: CE (Cue$\rightarrow$Episode) with direct indexing, CTE (Cue–Tag–Episode) with mediated episodic retrieval, and CTC (Cue–Tag–Content) with the full memory structure.
The CTE and CTC variant is evaluated both without reasoning (green bars) and with  reasoning (blue bars).

\textbf{Active multi-step reasoning is a primary factor underlying the observed performance gains.}
Across all memory structures, variants with reasoning (blue bars) consistently outperform their structure-only counterparts (green bars).
This indicates that multi-step reasoning and traversal are crucial for accumulating evidence and supporting multi-hop inference, whereas one-shot retrieval is insufficient to resolve complex multi-hop queries.

\textbf{Associative tags provide effective semantic guidance for retrieval.}
Within the no-reasoning setting (green bars), performance improves monotonically from CE to CTE to CTC, demonstrating that richer associative structures enable more reliable retrieval. In particular, tags help guide retrieval toward semantically relevant directions and reduce the inclusion of fragmented or irrelevant memory units.

\textbf{Episodic and semantic memory layers are complementary.}
Removing the semantic memory component leads to a clear performance degradation (blue bars).
While episodic memory preserves event-specific details, semantic memory captures stable, abstract knowledge that is essential for multi-hop reasoning.

\subsection{Multi-turn Reasoning Analysis (RQ4)}
\label{sec:5.5}


\begin{figure}[t]
  \centering
  \includegraphics[width=\columnwidth]{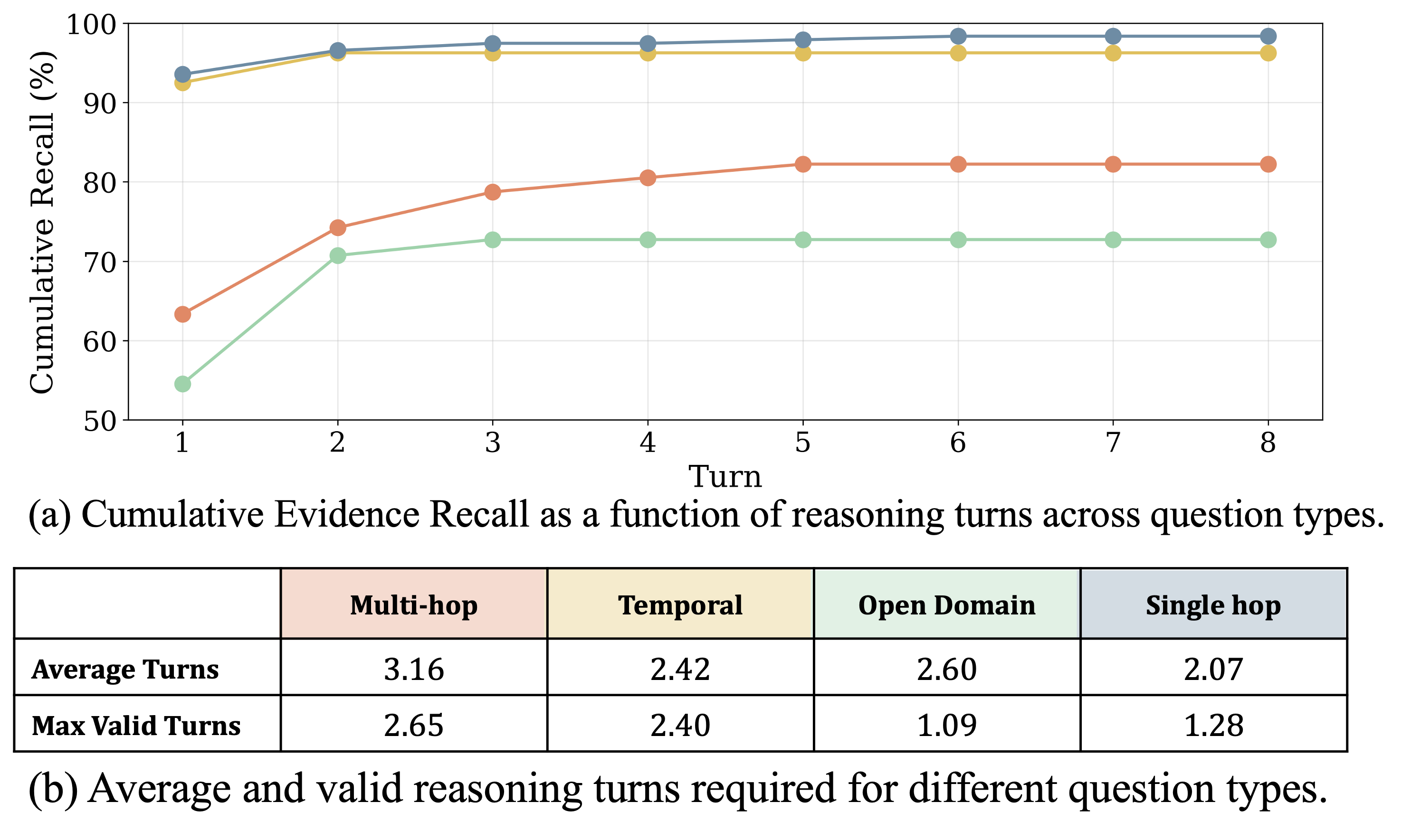}
  \caption{Analysis of multi-turn reasoning on \textsc{LoCoMo}
under the Claude backbone.}
  \label{fig:fig_multi_turns}
\end{figure}

As shown in Figure~\ref{fig:fig_multi_turns} (a), we evaluate cumulative evidence recall
across reasoning turns on the \textsc{LoCoMo} dataset.
Figure~\ref{fig:fig_multi_turns} (b) summarizes the average number of reasoning turns to convergence (Average Turns) and the maximum number of turns that
retrieve valid information (Max Valid Turns).

\textbf{Multi-turn reasoning progressively recovers missing evidence.} As shown in Figure~\ref{fig:fig_multi_turns} (a), single-hop (blue line) and
temporal (yellow line) queries reach near-perfect recall within approximately
three turns, whereas multi-hop queries benefit substantially from iterative
exploration, with recall improving by over 
$30\%$ across successive steps
(red line).
This highlights the necessity of multi-step reconstruction for resolving
compositional and long-range dependencies.

\textbf{The agent autonomously evaluates accumulated context to guide search and termination.}
As shown in Figure~\ref{fig:fig_multi_turns} (b), Max Valid Turns closely matches
Average Turns, suggesting that the LLM effectively determines when to continue searching and when to stop.
This behavior minimizes redundant exploration.
Furthermore, experiments in Appendix~\ref{sec:budget-analysis} show that increasing the parallel retrieval budget cannot substitute for deeper reconstruction depth.


\subsection{Case Study (RQ5)}
\begin{figure}[t]
    \centering
    \includegraphics[width=\linewidth]{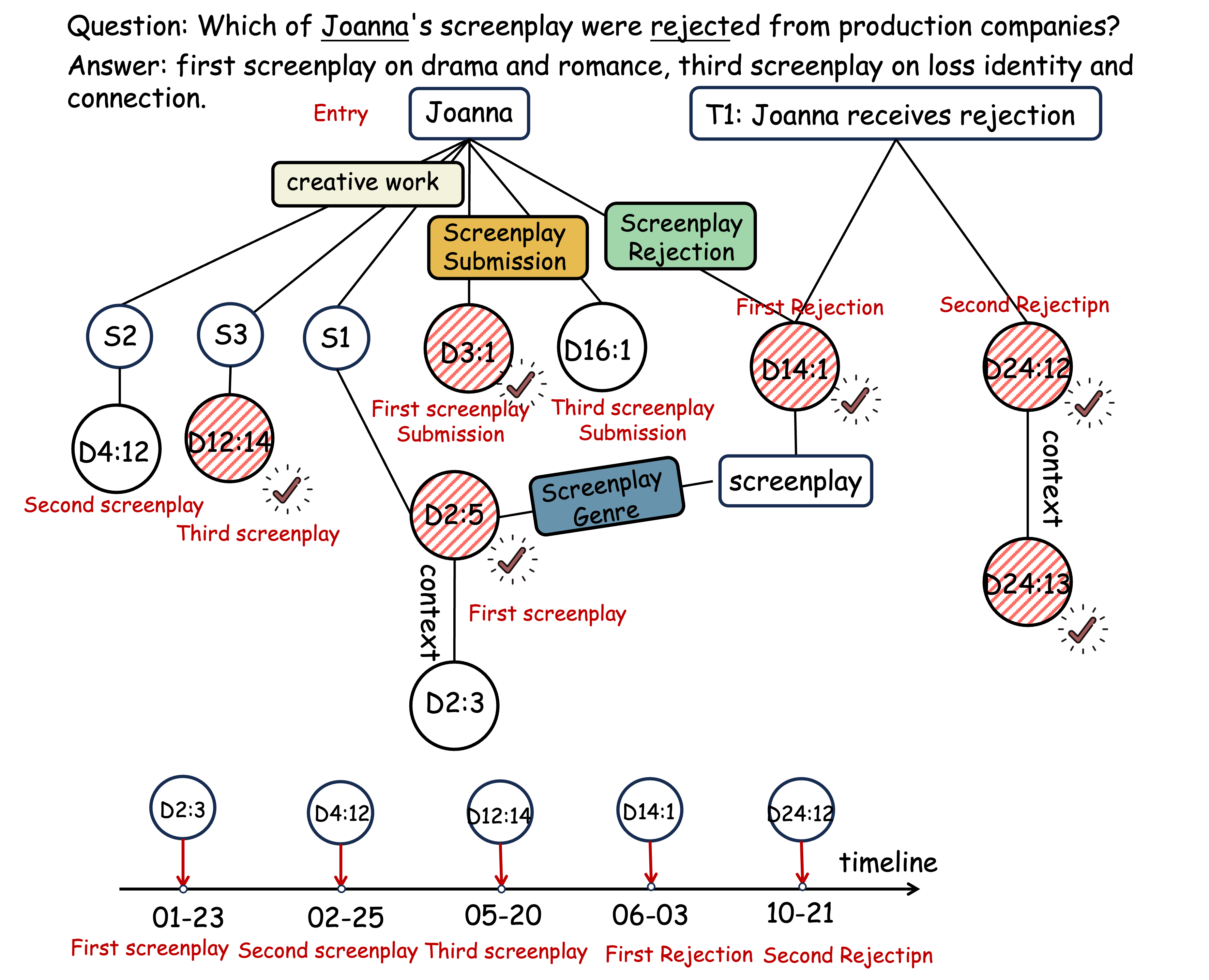}
\caption{
Reasoning trajectory of MRAgent over the memory graph for a multi-session query.
Starting from the cue ``Jonna'', the agent traverses multiple associative tags to
retrieve both episodic memories (e.g., screenplay submissions) and semantic
information (e.g., background about the screenplay).
It then reasons over higher-level topics to recover rejection events and aligns
them with previously retrieved submissions, enabling a coherent answer to the
query. Here, $D x\!:\!x$ denotes concrete event instances.
}
    \label{fig:case_study}
\end{figure}

We include a qualitative case study in Figure~\ref{fig:case_study} demonstrating MRAgent’s ability to actively reconstruct multi-session evidence over a structured memory graph for complex, temporally grounded queries, with detailed analysis provided in the Appendix~\ref{app:case_study}.



\section{Related Work}
\textbf{Retrieval-Augmented Generation.}
RAG~\cite{DBLP:conf/nips/LewisPPPKGKLYR020} augments LLMs by injecting external
documents into prompts via similarity search, where memory is an unstructured
vector store and retrieval reduces to a one-shot query-time top-$k$ selection.
Two variants extend this paradigm. GraphRAG~\citep{graphrag} organizes the
retrieval corpus into graph structures and retrieves through community summaries
and neighborhood expansion, improving global and multi-hop reasoning over flat
similarity search. Agentic RAG instead moves retrieval inside the reasoning
loop. Search-o1~\cite{li2025searcho1} couples large reasoning models with an
agentic mechanism that issues search queries on demand whenever a knowledge gap
is detected and refines retrieved documents before integrating them into the
reasoning chain, and Search-R1~\cite{jin2025searchr1} trains this behavior
through reinforcement learning over multi-turn query generation. These methods
retrieve from open or external corpora to fill factual knowledge gaps within a
single reasoning task, rather than reconstructing over an agent's own persistent
interaction history.

\textbf{Graph-based Memory.}
Graph-based memory systems organize agent memory as a graph to capture
structural dependencies among memory units. A-Mem~\cite{DBLP:journals/corr/abs-2502-12110}
constructs structured memory notes linked through LLM-assisted relation
extraction and retrieves via seed selection followed by neighborhood expansion.
Zep~\cite{DBLP:journals/corr/abs-2501-13956} maintains a bi-temporal knowledge
graph that tracks when facts hold and invalidates outdated edges, supporting
retrieval over evolving knowledge. LiCoMemory~\cite{DBLP:journals/corr/abs-2511-01448}
organizes memory as a lightweight hierarchical graph that uses entities and
relations as a semantic indexing layer, with temporal and hierarchy-aware search
for efficient retrieval. While these representations improve relational and
multi-hop access, traversal is governed by predefined operators and does not
adapt to evidence accumulated during retrieval.

\textbf{Hierarchical and Persistent Memory Systems.}
Hierarchical memory systems maintain long-lived memory that is continually
updated across interactions. MemoryOS~\cite{DBLP:journals/corr/abs-2506-06326}
organizes memory into a short, mid, and long-term hierarchy for structured
access. Mem0~\cite{DBLP:journals/corr/abs-2504-19413} maintains a compact set of
salient facts through LLM-driven add, update, and delete operations.
SeCom~\cite{pan2025secom} constructs the memory bank at the level of topically coherent segments for more accurate retrieval.  These systems advance how
memory is stored and updated over time, but their retrieval procedures remain
passive, selecting memory units as a fixed function of the query without
reasoning over intermediate evidence.
\section{Conclusion and Discussion}
We proposed MRAgent, a reconstructive memory agent that formulates memory access as an active, multi-step reconstruction process over a structured memory graph.
A key design choice of MRAgent is to shift the modeling of complex relational dependencies to the retrieval stage, enabling the agent to resolve more complex queries with fewer computational cost through targeted, state-dependent exploration.
As a result, our current implementation adopts a relatively simple memory construction strategy, without introducing additional complexity in memory updating or forgetting mechanisms.
This design choice also exposes several limitations that suggest directions for
future work. First, because relational reasoning is deferred to retrieval, the
cost of reconstruction grows with the depth of exploration, and queries that require many traversal steps incur higher latency than single-shot retrieval.
Second, our static construction does not update or consolidate memory over time, so the memory graph grows monotonically as interactions accumulate, raising storage overhead in long-lived deployments.  Addressing these limitations
through adaptive construction, lightweight memory maintenance, and more robust traversal policies is a promising avenue for extending active reconstruction to
broader long-horizon settings.

\section*{Acknowledgements}

This research is supported by the Ministry of Education, Singapore, under the Academic Research Fund Tier 2 (FY2025) (Grant T2EP20124-0038).

\section*{Impact Statement}

This work aims to advance the design of memory systems for large language model (LLM) agents by introducing an active, reconstructive paradigm for long-horizon memory reasoning.
Potential positive impacts include more reliable and context-aware AI assistants for applications such as personal assistants, decision support systems, and long-term human–AI interaction, where accurate recall and reasoning over past information are essential.
At the same time, as with other memory-augmented AI systems, the ability to store and reason over long-term interaction data raises considerations around privacy, data governance, and responsible deployment. These concerns are not specific to our method but apply broadly to persistent memory in AI agents.


\nocite{langley00}

\bibliography{references}
\bibliographystyle{icml2026}
\newpage
\appendix
\onecolumn

\section{Detailed Analysis of Passive Retrieval}
\label{app:retrieval-analysis}

This appendix provides a unified analysis of the retrieval mechanisms in representative memory systems for LLM agents, aligning all methods with the formulation in Section~\ref{sec:problem-setting}. Recall that memory access operates over an external memory $\mathcal{M}$ with units $\mathcal{V}=\{v_1,\ldots,v_N\}$. Given a query $x$, passive retrieval returns a fixed set $\pi_{\mathrm p}(x)$, while active reconstruction selects units sequentially via a stateful policy $v^{(t)}=\pi_{\mathrm a}^{(t)}(x,S^{(t-1)})$.

\label{app:analysis-existing}


\paragraph{Retrieval-Augmented Generation.}
Retrieval-augmented generation (RAG) adopts a classic similarity-based retrieval paradigm, where an external memory is organized as a vector store and queried in a single shot. Given a query $x$, RAG computes a relevance score between $x$ and each memory unit $v\in\mathcal{V}$, and retrieves the top-$k$ most similar items:
\begin{equation}
\pi_{\mathrm p}^{\textsc{RAG}}(x)
=
\mathrm{TopK}\!\left(\{\mathrm{sim}(x,v)\}_{v\in\mathcal{V}},\,k\right).
\end{equation}

\textbf{A-Mem}~\cite{DBLP:journals/corr/abs-2502-12110}.
A-Mem introduces a graph structure over memory units, where nodes correspond to structured memory notes and edges encode semantic relations constructed during memory construction. 
Given a query $x$, A-Mem performs retrieval by first selecting a seed memory item via similarity, and then expanding its predefined neighborhood along the memory graph: 
\begin{equation}
\begin{aligned}
\mathcal{V}^* &= \mathrm{TopK}\!\left(\{\mathrm{sim}(x,v)\}_{v\in\mathcal{V}},\,k\right),\\
\pi_{\mathrm p}^{\textsc{A-Mem}}(x)
&=
\mathcal{V}^* \ \cup\ \mathrm{Neighbor}(\mathcal{V}^*).
\end{aligned}
\end{equation}
where $\mathrm{Neighbor}(\mathcal{V}^*) = \bigcup_{v\in\mathcal{V}^*}\mathrm{Neighbor}(v)$
returns the nodes directly connected to the seed set in the memory graph. 

\textbf{MemoryOS}~\cite{DBLP:journals/corr/abs-2506-06326}.
MemoryOS organizes agent memory into a three-tier hierarchy with short-term memory (STM), mid-term memory (MTM), and long-term personal memories (LPM). Given a query $x$, it performs hierarchical retrieval by incorporating recent short-term context, relevant mid-term topic memories, and long-term persona information:
\begin{equation}
\begin{aligned}
R_{\mathrm{MTM}}(x)
&=
\mathrm{TopK}\!\left(
\{\mathcal{F}_{score}(x,v)\}_{v\in \mathcal{V}_{\mathrm{MTM}}}
\right),
\quad
R_{\mathrm{LPM}}^{\mathrm{fact}}(x)
=
\mathrm{TopK}\!\left(
\{\mathcal{F}_{score}(x,v)\}_{v\in \mathcal{V}_{\mathrm{LPM}}^{\mathrm{fact}}}
\right),\\[6pt]
\pi_{\mathrm p}^{\textsc{MemOS}}(x)
&=
\mathcal{V}_{\mathrm{STM}}
\;\cup\;
R_{\mathrm{MTM}}(x)
\;\cup\;
\mathcal{V}_{\mathrm{LPM}}^{\mathrm{profile}}
\;\cup\;
R_{\mathrm{LPM}}^{\mathrm{fact}}(x),
\end{aligned}
\end{equation}
where $\mathcal{V}_{\mathrm{LPM}}^{\mathrm{profile}}$ represents persona information, and $\mathcal{V}_{\mathrm{LPM}}^{\mathrm{fact}}$ denotes query-conditioned long-term factual memory.


\textbf{LangMem}~\cite{langmem2025}.
LangMem maintains conversational memory by compressing dialogue history into a set of memory summaries, which are stored in a vector-based memory store and injected into the model context during inference.
Each memory unit corresponds to a summarized dialogue message represented by an embedding vector.
Given a query $x$, LangMem retrieves relevant memories via similarity-based search:
\begin{equation}
\pi_{\mathrm p}^{\textsc{LangMem}}(x)
=
\mathcal{S}_{\textsc{LLM}}
\!\left(
\mathrm{TopK}\!\left(\{\mathrm{sim}(x,v)\}_{v\in\mathcal{V}},\,k\right),
\,x
\right).
\end{equation}
where $\mathrm{Summarize}_{\textsc{LLM}}(\cdot)$ denotes an LLM-based
summarization function that compresses retrieved memory entries into
a concise context representation.

\textbf{Mem0}~\cite{DBLP:journals/corr/abs-2504-19413}.
Mem0 maintains a persistent memory store composed of natural-language facts extracted incrementally from conversations via an LLM-driven memory construction and update process.
Each memory item corresponds to a salient factual statement, and memory evolution is governed by LLM-selected operations (ADD, UPDATE, DELETE, NOOP) to ensure consistency over time.
Given a query $x$, Mem0 performs retrieval by selecting the most relevant memory items via similarity search:
\begin{equation}
\pi_{\mathrm p}^{\textsc{Mem0}}(x) = \mathrm{TopK}\!\left(\{\mathrm{sim}(x,v)\}_{v \in \mathcal{V}},\, k\right).
\end{equation}


\paragraph{Conclusion.}
From the above analysis, existing memory systems primarily focus on the design of
memory representations, with retrieval procedures tightly coupled to the
underlying structure.
These approaches can be broadly categorized as similarity-based retrieval and
graph-based retrieval.
Although some methods employ structured memory representations, their retrieval
processes remain passive: traversal operators are predefined and fixed, and do
not adapt based on intermediate evidence during retrieval.

\section{Implementation and Execution Details}
\subsection{Memory Construction Pipeline}
\label{app:memory_system}
\textbf{Cue--Tag--Episode.}
To instantiate elements of the memory graph and explicitly model their associative relations, we employ an LLM to extract the components required to construct Cue--Tag--Episode triplets.
Given the raw dialogue text $T$, we first rewrite and refine it to resolve contextual dependencies and make cues explicit, and then segment the processed text into coherent episodic units:
\begin{align}
\{e_i\} \leftarrow \mathcal{R}_{\text{LLM}}(T),
\end{align}
where $\mathcal{R}_{\text{LLM}}$ performs raw text processing, including pronoun resolution, temporal normalization, and episodic segmentation.
For each episodic segment $e_i$, we generate an associative tag and a set of fine-grained cues using LLM-based extractors:
\begin{align}
g_i \leftarrow \mathcal{T}_{\text{LLM}}(x_i), \quad
\mathcal{C}_i \leftarrow \mathcal{K}_{\text{LLM}}(x_i),
\end{align}
where $\mathcal{T}_{\text{LLM}}$ produces a short and precise phrase summarizing the core semantic or relational pattern of the episode,
and $\mathcal{K}_{\text{LLM}}$ extracts a set of cues $C_i$, including entities, attributes, and other salient descriptors.
We then add nodes for $x_i$ and each $c \in C_i$, and associate them with the tag $g_i$ to form the Cue--Tag--Episode relations.

\textbf{Cue--Tag--Semantic.}
Semantic memory complements episodic reconstruction by providing relatively stable abstractions beyond individual episodes~\cite{anokhin2024arigraph,hu2025memory}.
Each semantic unit is represented as $(c^s_i, g^s_i, s_i)$, where $s_i$ denotes the semantic content, $c^s_i$ is an entity-level cue, and $g^s_i$ is an aspect-level tag.
We extract such semantic units from the input text $T$ via an LLM-based semantic extraction function:
\begin{align}
\{(c^s_i, g^s_i, s_i)\} \leftarrow \mathcal{S}_{\text{LLM}}(T),
\end{align}
where $\mathcal{S}_{\text{LLM}}$ identifies stable facts and attributes expressed in $T$, summarizes them into semantic content $s_i$,
and assigns an entity-level cue $c^s_i$ together with an aspect-level tag $g^s_i$.
The extracted $(c^s_i, g^s_i, s_i)$ units are inserted into the same memory graph and linked by Cue--Tag--Semantic associations.

\textbf{High-level Topic Memory.}
While Tag captures episode-level semantic patterns, Topic summarizes recurrent patterns across episodes and provides a higher-level organization for multi-granular reasoning.
This design is consistent with the view that higher-order structures can be abstracted from repeated episodic experiences~\cite{pan2025secom, tan2025prospect}.
Concretely, we obtain topic nodes by applying an LLM-based abstraction function to the set of episodic segments:
\begin{align}
\{\tau_j\} \leftarrow \mathcal{A}_{\text{LLM}}(\{e_i\}),
\end{align}
where each topic node $\tau_j$ summarizes a coherent set of episodes that share recurring semantic patterns.
We then add topic--episode links to connect each $\tau_j$ with its associated episodes, enabling the agent to reason at a higher level when fine-grained episodic traversal is unnecessary.
\begin{table*}[t]
\caption{Memory toolkit providing operations for controlled memory traversal.}
\centering
\begin{tabular}{lcl}
\toprule
\textbf{Tool} & \textbf{Mapping Function} & \textbf{Description} \\
\midrule
\texttt{query\_tag\_events}
& $\phi_{(c,g)\rightarrow e}(c,g)$
& Retrieve episodic events associated with a cue--tag pair. \\

\texttt{query\_conversation\_time}
& $\phi_{e\rightarrow t}(v^{e}e)$
& Return the conversation timestamp of an episodic event. \\

\texttt{query\_event\_keywords}
& $\phi_{e\rightarrow (c,g)}(e)$
& Extract associated cues and tags from an episodic event. \\

\texttt{query\_event\_context}
& $\phi_{e\rightarrow \mathrm{ctx}}(e)$
& Retrieve contextual text surrounding the episodic event. \\

\texttt{query\_personal\_information}
& $\phi_{c^{s}\rightarrow g^{s}}(c^{s})$
& Return semantic aspects associated with a person entity. \\

\texttt{query\_personal\_aspect}
& $\phi_{(c^{s},g^{s})\rightarrow v^{s}}(c^{s},g^{s})$
& Retrieve semantic content for a (person, aspect) pair. \\

\texttt{query\_topic\_events}
& $\phi_{\tau\rightarrow e}(\tau)$
& Retrieve episodic events associated with a topic node. \\
\bottomrule
\end{tabular}
\label{tab:tools}
\end{table*}

\subsection{Executable Memory Reconstruction}
\label{app:reconstruction}
Figure~\ref{fig:detailed_reconstruction} presents a detailed illustration of the memory reconstruction process, accompanied by an example diagram and pseudocode.
To realize multi-step reasoning trajectories, we design a specialized toolkit that executes the traversal actions determined by the LLM and returns newly retrieved evidence from the memory graph.

MRAgent operates under two execution modes, $\Psi \in \{\mathtt{Navigate}, \mathtt{Answer}\}$.
In the $\mathtt{Navigate}$ mode, the agent invokes the toolkit to progressively explore the memory graph and accumulate evidence.
Once sufficient evidence has been collected, the agent transitions into the $\mathtt{Answer}$ mode to generate the final response conditioned on the reconstructed context.

Table~\ref{tab:tools} summarizes the toolkit used for memory traversal and evidence acquisition.
Each tool corresponds to a typed mapping between memory components, enabling the LLM to explicitly control the direction and granularity of memory access.
Rather than retrieving a fixed set of memories, the agent composes these tools into sequential or parallel invocations to progressively reconstruct relevant context.
At each navigation step, the LLM selects and invokes one or more tools based on the current reconstruction state.
The returned results are treated as candidate evidence and are evaluated by the LLM to decide whether to expand, refine, or terminate a traversal path.
This explicit tool-based interface ensures that memory access remains interpretable and controllable, while allowing the agent to adapt its retrieval strategy to intermediate evidence.

\begin{figure}[t]
  \centering
  \begin{minipage}[t]{0.48\linewidth}
    \null
    \centering
    \includegraphics[width=\linewidth]{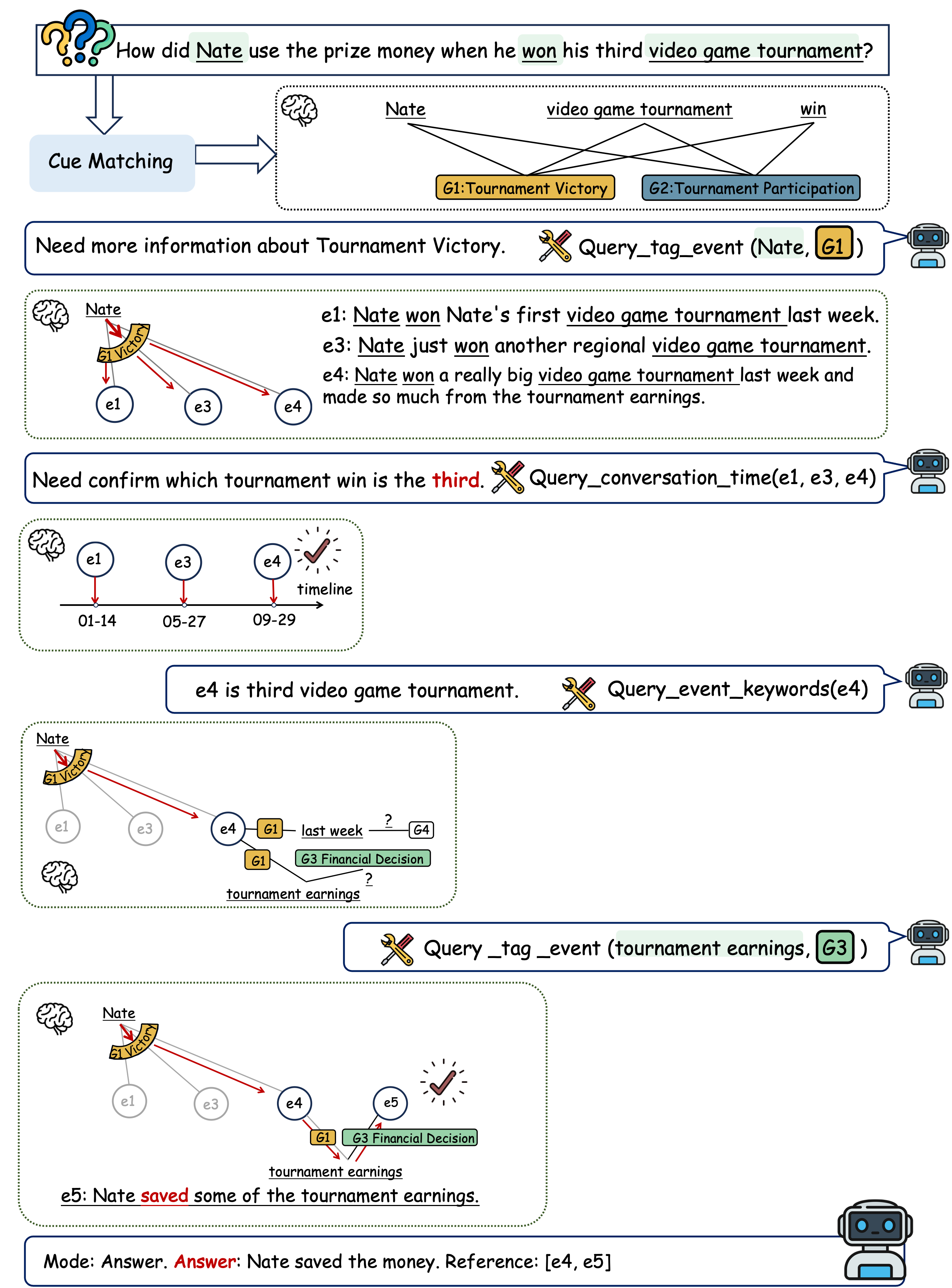}
  \end{minipage}
  \hfill
  \begin{minipage}[t]{0.48\linewidth}
    \null
\begin{algorithm}[H]
\small
\caption{Active Memory Reconstruction}
\label{alg:mragent-reconstruct}
\begin{algorithmic}[1]
\REQUIRE Query $x$, memory graph $\mathcal{G}$, traversal actions $\mathcal{A}=\{\Pi_1,\dots,\Pi_m\}$, max steps $T$
\ENSURE Answer $\hat{y}$

\STATE $\mathcal{C} \leftarrow \textsc{ExtractCues}(x)$
\STATE $\mathcal{Z}^{(0)} \leftarrow \textsc{ActiveSetInit}(\mathcal{C}, \mathcal{G})$
\STATE $\mathcal{H}^{(0)} \leftarrow \varnothing$
\STATE $\mathcal{S}^{(0)} \leftarrow (\mathcal{Z}^{(0)}, \mathcal{H}^{(0)})$

\FOR{$t = 0$ \textbf{to} $T-1$}
    \STATE \textit{// Action selection}
    \STATE $\mathcal{A}^{(t)} \leftarrow f_{\text{select}}(x, \mathcal{H}^{(t)}, \mathcal{Z}^{(t)})$ 
    \textit{// }{$\mathcal{A}^{(t)} \subseteq \mathcal{A}$}

    \STATE \textit{// Controlled traversal}
    \STATE $\widetilde{\mathcal{Z}}^{(t+1)} \leftarrow \varnothing$
    \FORALL{$a \in \mathcal{A}^{(t)}$}
        \STATE $\widetilde{\mathcal{Z}}^{(t+1)} \leftarrow 
        \widetilde{\mathcal{Z}}^{(t+1)} \cup \Pi_a(\mathcal{Z}^{(t)})$
    \ENDFOR

    \STATE \textit{// Routing and state update}
    \STATE $\mathcal{Z}^{(t+1)} \leftarrow 
    f_{\text{route}}(x, \mathcal{H}^{(t)}, \widetilde{\mathcal{Z}}^{(t+1)})$
    \STATE $\mathcal{H}^{(t+1)} \leftarrow 
    \mathcal{H}^{(t)} \cup \mathcal{Z}^{(t+1)}$

    \IF{$\textsc{Stop}(x, \mathcal{H}^{(t+1)})$}
        \STATE \textbf{break}
    \ENDIF
\ENDFOR

\STATE $\hat{y} \leftarrow \textsc{AnswerLLM}(x, \mathcal{H}^{(t+1)})$
\RETURN $\hat{y}$
\end{algorithmic}
\end{algorithm}

  \end{minipage}
  \caption{Left: An illustrative example of memory reconstruction given a query.
Right: Pseudocode of the memory reconstruction algorithm.}
  \label{fig:detailed_reconstruction}
\end{figure}

\section{Theoretical Analysis: Active vs. Passive Retrieval over a Heterogeneous KG Memory}
\label{app:active-vs-passive-separation}

MRAgent is designed as an \emph{active} retrieval approach over a graph-based memory. This section formalizes an approximation-theoretic advantage of such
\emph{active} (adaptive) retrieval over \emph{passive} (non-adaptive) retrieval.

\subsection{Setup}

We consider tasks involving queries $x \in \mathcal{X}$, with answers $y$ from a label space \(\mathcal{Y}\).

Our MRAgent method uses a heterogeneous knowledge graph (KG) memory with cues, tags and episodes. However, for generality and notational simplicity, our theory generalizes this to arbitrary heterogeneous KGs with textual information on each node.

\paragraph{Heterogeneous KG memory.}
For each problem size \(n\), a memory \(M \in \mathcal{M}_n\) is a heterogeneous KG
with node set
\[
  \mathcal{V}(M) = \{v_1, v_2, \dots, v_n\}.
\]
Each node \(v \in \mathcal{V}(M)\) has a type \(\tau(v)\) in some type set, and a textual \emph{payload} $p(v) \in \mathcal{P}$.
The KG also has a finite set of relation labels
\(\mathcal{R}\) (e.g., tag types).

\paragraph{Node-retrieval operator.}
Our LM interacts with the KG via a \emph{retrieval operator} that, given a node, returns
both (i) the node payload and (ii) a bounded list of outgoing neighbors.

Formally, fix a branching bound \(B \ge 1\). For each memory \(M\), define
\[
  \mathrm{Retrieve}^M : \mathcal{V}(M) \to \mathcal{P} \times (\mathcal{R}\times \mathcal{V}(M))^{\le B}.
\]
Given a node \(v\), the operator returns
\[
  \mathrm{Retrieve}^M(v) = \bigl(p(v),\ \mathrm{Out}(v)\bigr),
\]
where \(p(v)\in\mathcal{P}\) is the payload and \(\mathrm{Out}(v)\) is a list of at most \(B\) outgoing edges \((r,v')\), where $r$ is the edge's relation and $v'$ is its target node. This abstraction matches common
KG implementations where a ``node fetch'' returns attributes/text plus a bounded set of
neighbor references.

\subsection{Population risk and approximation error}

\begin{definition}[Population risk]
For a predictor \(\pi\) and distribution \(D\), define the population risk under \(0\)--\(1\) loss:
\[
  L(\pi; D) := \mathbb{E}_{(M,x,y)\sim D}\bigl[\mathbf{1}[\pi(x,M)\neq y]\bigr].
\]
\end{definition}

\begin{definition}[Approximation error]
For a hypothesis class \(\mathcal{H}\) and distribution \(D\), define
\[
  \mathrm{opt}(\mathcal{H}; D) := \inf_{\pi\in\mathcal{H}} L(\pi; D).
\]
\end{definition}

A policy that learns nothing about \(y\) can still guess using the prior.
To capture this baseline under \(0\)--\(1\) loss, define 
\[
  \varepsilon_Y \;:=\; 1 - \sup_{y \in \mathcal{Y}} P_Y(y).
\]
Intuitively, \(\varepsilon_Y\) is the minimum error achievable when you only know the
prior \(P_Y\) and have no additional information about \(y\).

\subsection{Active vs. passive retrieval}

We now define active and passive policies, given an LM with parameter $\theta \in \Theta$ and a retrieval budget of \(T\) steps. The LM induces functions $Q_{\theta,t}$ for $t \le T$ which decide which node to retrieve in step $t$, and a final answer head $H_\theta$ which outputs an answer given the retrieval history. 

Under the active policy, $Q_{\theta,t}$ is conditioned on the entire history seen so far, but under the passive policy, it is conditioned only on the query. Let $v^{(t)}$ denote the node the LM chooses to retrieve in step $t$. Formally:

\begin{definition}[Active LM+retrieval policy class]
An \emph{active} policy with parameters \(\theta\) proceeds as:
\[
  v^{(1)} = Q_{\theta,1}(x), \qquad z^{(1)} = \mathrm{Retrieve}^M(v^{(1)}),
\]
and for \(t=2,\dots,T\),
\[
  v^{(t)} = Q_{\theta,t}(x, z^{(1)},\dots,z^{(t-1)}), \qquad z^{(t)} = \mathrm{Retrieve}^M(v^{(t)}).
\]
The prediction is
\[
  \pi^{\mathrm{act}}_\theta(x,M) = H_\theta(x, z^{(1)},\dots,z^{(T)}).
\]
Define the hypothesis class
\[
  \mathcal{H}^{\mathrm{LM}}_{\mathrm{active}}(T) := \{\pi^{\mathrm{act}}_\theta : \theta\in\Theta\}.
\]
\end{definition}

\begin{definition}[Passive LM+retrieval policy class]
A \emph{passive} policy with parameters \(\theta\) proceeds as:
\[
  v^{(t)} = Q^{\mathrm{pass}}_{\theta,t}(x), \qquad t=1,\dots,T,
\]
then retrieves \(z^{(t)} = \mathrm{Retrieve}^M(v^{(t)})\) for all \(t\), and outputs
\[
  \pi^{\mathrm{pass}}_\theta(x,M) = H_\theta(x, z^{(1)},\dots,z^{(T)}).
\]
Define
\[
  \mathcal{H}^{\mathrm{LM}}_{\mathrm{passive}}(T) := \{\pi^{\mathrm{pass}}_\theta : \theta\in\Theta\}.
\]
\end{definition}

\subsection{Main theorem: active retrieval is strictly more powerful than passive}

We show that allowing the retriever to adapt its queries to previously retrieved
content yields a strictly larger class of predictors than any non-adaptive (query-only) retriever.

\begin{theorem}[Active retrieval is strictly more powerful than passive] For any retrieval budget \(T\ge 2\), the passive hypothesis class is strictly contained in the active hypothesis class: \[
\mathcal{H}^{\mathrm{LM}}_{\mathrm{passive}}(T)\ \subsetneq\ \mathcal{H}^{\mathrm{LM}}_{\mathrm{active}}(T). \] 
\label{thm:main}
\end{theorem}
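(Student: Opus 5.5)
The proof has two parts: an inclusion, obtained by simulation, and a strictness claim, obtained by an explicit separating task.

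\textbf{Inclusion.} Fix $\theta$ and consider the passive predictor $\pi^{\mathrm{pass}}_\theta$. We define active query functions that ignore the history:
\[
Q_{\theta',t}(x, z^{(1)},\dots,z^{(t-1)}) := Q^{\mathrm{pass}}_{\theta,t}(x),
\]
and reuse the same head $H_\theta$. The retrieved nodes then coincide step by step, so the active predictor agrees with $\pi^{\mathrm{pass}}_\theta$ on every $(x,M)$. This gives $\pi^{\mathrm{pass}}_\theta \in \mathcal{H}^{\mathrm{LM}}_{\mathrm{active}}(T)$.

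This step relies on an assumption about the parameter space: $\Theta$ must be rich enough that any function of $x$ alone, viewed as a function of $(x,z^{(1)},\dots,z^{(t-1)})$, is realized by some $\theta'$. I will state explicitly that the LM class is closed under ignoring inputs, which is reasonable for an expressive model. I will also state that it can realize the finite lookup-style maps used below.

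\textbf{Strictness: the pointer-chasing family.} For each $n$, build memories $M\in\mathcal{M}_n$ as follows.
\begin{itemize}
\item A fixed root node $r$ has payload encoding a pointer, i.e.\ an out-edge, to one of $m\ge 2$ ``hidden'' nodes $u_1,\dots,u_m$. The index $J$ is uniform and unknown a priori.
\item Each $u_j$ carries a payload bit $b_j$.
\item The label is $y=b_J$, and the query $x$ is a fixed constant.
\end{itemize}
The active predictor retrieves $r$ at step $1$, reads the out-edge, retrieves $u_J$ at step $2$ (this needs $T\ge 2$), pads any remaining steps arbitrarily, and outputs $b_J$. It is therefore exact on every $M$.

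\textbf{Strictness: passive policies fail.} A passive policy commits to a fixed set of at most $T$ nodes, since $x$ is constant. Choose $m > T$, so that any such set misses $u_J$ with positive probability. To make the argument go through, I must ensure that the identities of the $u_j$ are not known in advance. The cleanest choice is to fix node identities but randomize $J$ over $m=n-1>T$ hidden nodes, with the $b_j$ i.i.d.\ uniform.

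Now condition on the event that $u_J$ is not among the queried nodes. On this event the passive view, consisting of payloads and out-edges of the queried nodes (including $r$ and the other $u_j$'s), is independent of $b_J$. Hence the error is at least
\[
\tfrac12\cdot\Pr[u_J\notin\text{queried set}] \;\ge\; \tfrac12\,(1-T/m) > 0 .
\]
Since the active predictor achieves error $0$ on this same instance, no passive predictor equals it as a function. This gives $\mathcal{H}^{\mathrm{LM}}_{\mathrm{passive}}(T)\subsetneq \mathcal{H}^{\mathrm{LM}}_{\mathrm{active}}(T)$, and moreover $\mathrm{opt}(\mathcal{H}_{\mathrm{passive}};D)>0=\mathrm{opt}(\mathcal{H}_{\mathrm{active}};D)$.

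\textbf{Main obstacle.} The delicate step is the independence claim. I need the retrieved payloads and out-edge lists of non-target nodes to carry no information about $b_J$, and the out-edge lists of the $u_j$ to respect the bound $B$. I will handle this by making $\mathrm{Out}(u_j)=\emptyset$ and letting only $r$ have a single out-edge to $u_J$, so $B\ge1$ suffices. The argument also has an interpretive dependence: the statement compares function classes, so I must fix the convention that membership means pointwise equality on all $(x,M)$, and spell out the richness assumption on $\Theta$ mentioned above.
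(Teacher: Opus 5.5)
Your proof is correct. Its skeleton matches the paper's: the inclusion comes from a policy that ignores the retrieval history, and strictness comes from a pointer-following distribution on which an active policy is exact while every passive policy has positive risk. Your separating instance is genuinely different, though, and the difference matters at the boundary case.

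\textbf{The paper's instance.} The paper uses a complete binary tree of depth $d=T-1$. The active policy reads one revealed bit per hop, while a passive policy must guess one of $2^d$ leaves. This gives the bound $\varepsilon_Y\bigl(1-T/2^{T-1}\bigr)$ and exhibits a gap that is exponential in the number of hops, which fits the paper's multi-hop story. At $T=2$, however, this bound is $\varepsilon_Y(1-2/2)=0$, because a passive policy can retrieve both leaves of the depth-one tree. So the paper's argument as written only yields strictness for $T\ge 3$.

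\textbf{Your instance.} Your single-hop star decouples the fan-out $m$ from the budget. Choosing $m>T$ gives the positive bound $\tfrac12(1-T/m)$ uniformly for every $T\ge 2$, including $T=2$. It needs only two adaptive retrievals and $B\ge 1$. Your conditioning argument on the miss event is sound: the event is determined by $J$, and the i.i.d.\ decoy bits keep $b_J$ uniform given the passive transcript. What you give up is the depth-dependent exponential separation, which the strict inclusion does not require.

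\textbf{The assumption on $\Theta$.} You make explicit that $\Theta$ must realize history-ignoring query maps and the lookup-style maps your active strategy uses. The paper relies on the same assumption implicitly, both in its simulation step and in asserting that a zero-error active parameter exists, so stating it is a clarification rather than an additional hypothesis.
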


To start, we first observe that passive policies are a subset of active policies, making the active hypothesis class \emph{at least} as strong as the passive class. 

\begin{lemma} \label{lem:power} For any retrieval budget \(T\), we have \[ \mathcal{H}^{\mathrm{LM}}_{\mathrm{passive}}(T)\ \subseteq\ \mathcal{H}^{\mathrm{LM}}_{\mathrm{active}}(T). \] \end{lemma}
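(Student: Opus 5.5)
The plan is a direct simulation argument: every passive policy is realized by some active policy with the same prediction function. Fix \(\pi^{\mathrm{pass}}_\theta\in\mathcal{H}^{\mathrm{LM}}_{\mathrm{passive}}(T)\), given by query maps \(Q^{\mathrm{pass}}_{\theta,t}\) and answer head \(H_\theta\). We construct an active policy whose step-\(t\) query map ignores the retrieval history. Concretely, set
\[
Q_{\theta',1}(x) := Q^{\mathrm{pass}}_{\theta,1}(x), \qquad Q_{\theta',t}(x,z^{(1)},\dots,z^{(t-1)}) := Q^{\mathrm{pass}}_{\theta,t}(x) \quad (t=2,\dots,T),
\]
and use the same answer head \(H_{\theta'}:=H_\theta\).

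The key step is an induction on \(t\). For every memory \(M\) and query \(x\), the node \(v^{(t)}\) chosen by the active policy coincides with the one chosen by the passive policy. Since \(\mathrm{Retrieve}^M\) is a fixed deterministic operator, the retrieved \(z^{(t)}\) also coincide. The base case is immediate from the definition of \(Q_{\theta',1}\). The inductive step holds because \(Q_{\theta',t}\) does not depend on its history arguments at all. With \(z^{(1)},\dots,z^{(T)}\) identical and the answer heads equal, we get \(\pi^{\mathrm{act}}_{\theta'}(x,M)=\pi^{\mathrm{pass}}_\theta(x,M)\) for all \((x,M)\). Therefore \(\pi^{\mathrm{pass}}_\theta\in\mathcal{H}^{\mathrm{LM}}_{\mathrm{active}}(T)\).

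The only real obstacle is representational. We need the constructed maps to correspond to some parameter \(\theta'\in\Theta\). In other words, the LM parameterization must be able to implement a step-\(t\) query function that reads only \(x\) and disregards the prefix \(z^{(1)},\dots,z^{(t-1)}\). I would handle this by making explicit the natural assumption implicit in the definitions: the active and passive classes share a common parameter space, and for each \(\theta\) the passive maps \(Q^{\mathrm{pass}}_{\theta,t}\) are the active maps \(Q_{\theta,t}\) evaluated with the history masked out. Equivalently, the family of realizable history-dependent query maps is closed under composition with the projection onto the query coordinate. For a sufficiently expressive LM this is benign, since attention can simply ignore the retrieved context. Under this assumption we may take \(\theta'=\theta\), and the inclusion follows.
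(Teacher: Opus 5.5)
Your proposal is correct and matches the paper's proof: the paper likewise defines an active policy whose step-$t$ query map ignores the retrieval history and returns $Q^{\mathrm{pass}}_{\theta,t}(x)$, and keeps the same answer head $H_\theta$, so the two predictors agree on every $(x,M)$. Your explicit assumption that this history-ignoring query map is realized by some parameter in $\Theta$ is something the paper uses without stating it, so spelling it out is a useful clarification rather than a change of approach.
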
 \begin{proof} Fix \(T\) and any passive policy \(\pi^{\mathrm{pass}}_\theta \in \mathcal{H}^{\mathrm{LM}}_{\mathrm{passive}}(T)\). By definition, its retrieval queries take the form \(v^{(t)} = Q^{\mathrm{pass}}_{\theta,t}(x)\), depending only on the query \(x\). Define an active policy \(\tilde \pi^{\mathrm{act}}_\theta\) with queries \[ \tilde v^{(t)} \;=\; \tilde Q_{\theta,t}(x,z^{(1)},\dots,z^{(t-1)}) \;:=\; Q^{\mathrm{pass}}_{\theta,t}(x), \qquad t=1,\dots,T, \] i.e., the active policy ignores the retrieval history and issues the same sequence of nodes as the passive one. Using the same answer head \(H_\theta\), the resulting prediction function satisfies \(\tilde \pi^{\mathrm{act}}_\theta(x,M)=\pi^{\mathrm{pass}}_\theta(x,M)\) for all \((x,M)\). 
\end{proof} 

\paragraph{What remains to be shown.} To prove strictness in Theorem~1 for any \(T\ge 2\), it now suffices to construct a distribution \(D\) (depending on \(T\)) such that active retrieval can achieve zero error with budget \(T\), while any passive policy with the same budget has error bounded away from zero, which we will show in the next subsection.

\subsection{A separating task family: Binary-Tree Needle-in-a-Haystack}

We now construct a task with the required property. Intuitively, the query identifies a root node.
Retrieval from the root reveals two candidate children; the root payload contains a single
bit telling which child to follow. Repeating for \(d\) hops yields a unique target leaf,
whose payload contains the answer label. Active retrieval can follow the revealed bits in order to achieve zero error, but passive retrieval must guess the leaf, incurring irreducible error.

\paragraph{Binary tree.}
We form a complete binary tree of depth $d = T-1$. The root is denoted $v_\emptyset$. Each internal node $v_u$ is indexed by a binary string $u$ representing the path from the root to this node: starting from the root, each bit indicates whether to take the left ($0$) or right ($1$) child. As such, each internal node $v_u$ has two children, labeled $v_{u0}$ and $v_{u1}$.

\paragraph{Ground-truth leaf.}
Sample a target leaf index \(u^\star \in \{0,1\}^d\) uniformly at random, and let the target node be the corresponding
leaf \(v_{u^\star}\). We interpret the bits of \(u^\star = (u^\star_1,\dots,u^\star_d)\) as the ground-truth path from
the root: starting at \(v_\emptyset\), at depth \(t\) we follow the child indexed by \(u^\star_t \in \{0,1\}\), so the
unique root-to-leaf path is
\[
v_\emptyset \;\to\; v_{u^\star_1} \;\to\; v_{u^\star_1u^\star_2} \;\to\; \cdots \;\to\; v_{u^\star_1\cdots u^\star_d}
\;=\; v_{u^\star}.
\]
To make this path discoverable by active retrieval, we encode the next bit along the target path in
the payloads of nodes on the path: for each prefix \(u = u^\star_1\cdots u^\star_t\) with \(|u|=t<d\),
\[
p(v_u)\ \text{encodes the next bit}\ u^\star_{t+1}.
\]

\paragraph{Answer label.}
Sample \(y \sim P_Y\). The answer label is encoded \emph{only} in the target leaf payload:
\[
p(v_{u^\star})\ \text{encodes}\ y.
\]
All other returned information (payloads of non-target nodes and outgoing lists) is independent of \(y\). In particular,
unless a policy retrieves \(v_{u^\star}\), it obtains no information about \(y\) beyond the prior.

\paragraph{Query.}
The query \(x\) identifies the root node \(v_\emptyset\) and asks for the label $y$.

\begin{definition}[Binary-Tree Needle-in-a-Haystack distribution \(D_{n,d}\)]
Fix \(d\ge 1\) and set \(n\) as:
\[
n \;=\; 1 + 2 + \cdots + 2^d \;=\; 2^{d+1}-1.
\]
The distribution \(D_{n,d}\) over triples \((M,x,y)\in \mathcal{M}_n\times \mathcal{X}_n\times \mathcal{Y}\) is defined
as follows:
\begin{enumerate}
\item Sample a target leaf index \(u^\star \sim \mathrm{Unif}(\{0,1\}^d)\).
\item Sample \(y \sim P_Y\).
\item Construct a heterogeneous KG memory \(M\in \mathcal{M}_n\) as the binary tree above:
\begin{itemize}
\item for each prefix \(u = u^\star_1\cdots u^\star_t\) with \(t<d\), the payload \(p(v_u)\) encodes the next bit
\(u^\star_{t+1}\);
\item the payload \(p(v_{u^\star})\) encodes \(y\);
\item all other node payloads are independent of
\(y\).
\end{itemize}
\item Output a query \(x\) that identifies \(v_\emptyset\) and asks for the label stored at \(v_{u^\star}\).
\end{enumerate}
\end{definition}

\subsection{Active retrieval achieves zero error}

\begin{lemma} \label{lem:active}
For the distribution \(D_{n,d}\), there exists a parameter setting \(\theta\) and a budget \(T=d+1\) such that
\[
L\!\left(\pi^{\mathrm{act}}_\theta;\, D_{n,d}\right)=0.
\]
Consequently,
\[
\mathrm{opt}\!\left(\mathcal{H}^{\mathrm{LM}}_{\mathrm{active}}(d+1);\, D_{n,d}\right)=0.
\]
\end{lemma}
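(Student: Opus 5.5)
The plan is to exhibit an explicit active policy, with budget \(T=d+1\), that follows the bit pointers stored along the target path. Then we check by induction that at every step it retrieves exactly the node on the ground-truth path.

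\textbf{The policy.} The first query is \(v^{(1)}=Q_{\theta,1}(x):=v_\emptyset\); this is well defined because the query \(x\) identifies the root. For \(t=2,\dots,d+1\), the query function \(Q_{\theta,t}(x,z^{(1)},\dots,z^{(t-1)})\) reads the last response \(z^{(t-1)}=(p(v^{(t-1)}),\mathrm{Out}(v^{(t-1)}))\). It decodes the bit \(b_{t-1}\in\{0,1\}\) from the payload and returns the child of \(v^{(t-1)}\) with index \(b_{t-1}\). That child is available in the outgoing list, since each internal node has exactly two children and \(B\ge 1\). If \(B=1\), we instead let the policy address the child directly by its index \(u\,b_{t-1}\); this uses the fact that node identities are binary strings. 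Finally, the answer head \(H_\theta\) decodes \(y\) from the payload \(p(v^{(d+1)})\).

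\textbf{Correctness.} We claim that \(v^{(t)}=v_{u^\star_1\cdots u^\star_{t-1}}\) for \(t=1,\dots,d+1\), and prove it by induction on \(t\). The base case \(t=1\) is the root. For the inductive step, \(v^{(t-1)}\) is the prefix node \(v_u\) with \(|u|=t-2<d\). By construction of \(D_{n,d}\), its payload encodes \(u^\star_{t-1}\), so the policy moves to \(v_{u u^\star_{t-1}}\). At \(t=d+1\) this gives \(v^{(d+1)}=v_{u^\star}\). The payload of \(v_{u^\star}\) encodes \(y\), so \(H_\theta\) outputs \(y\) on every sample \((M,x,y)\) in the support of \(D_{n,d}\). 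Hence \(\mathbf{1}[\pi^{\mathrm{act}}_\theta(x,M)\neq y]=0\) almost surely, and \(L=0\). Because risk is nonnegative and \(\mathrm{opt}\) is an infimum over a class containing this \(\pi^{\mathrm{act}}_\theta\), we get \(\mathrm{opt}=0\).

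\textbf{Main obstacle.} The only delicate point is representability: we must make sure the LM parameter space \(\Theta\) can realize these particular \(Q_{\theta,t}\) and \(H_\theta\). The approach is to treat the class as containing every measurable (indeed finite lookup) function of the history, consistent with how Lemma~\ref{lem:power} freely defines \(\tilde Q_{\theta,t}\). I would state this expressivity assumption explicitly. I would also fix the payload encoding so that decoding is unambiguous, for example by tagging payloads as ``next-bit'' or ``label''. This prevents the decoded bits or label from being confused with the independent payloads of non-target nodes, although the policy never visits those nodes anyway.
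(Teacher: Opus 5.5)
Your proposal is correct and is essentially the paper's proof: the paper uses the same active policy, which starts at the root, reads the next bit from each payload along the target path to choose the child for $d$ steps, and then reads $y$ from the leaf payload using $d+1$ retrievals; you formalize it more carefully with an explicit induction and a stated expressivity assumption on $\Theta$. One small slip: $B\ge 1$ does not guarantee that both children appear in $\mathrm{Out}(v)$ (that needs $B\ge 2$), but your fallback of addressing $v_{ub}$ directly by its index works for every $B$, and that is exactly what the paper does.
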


\begin{proof}
Consider the following active strategy. The query \(x\) identifies the root node \(v_\emptyset\).
Initialize \(u:=\emptyset\). For \(t=0,1,\dots,d-1\):
\begin{enumerate}
\item Retrieve the current node \(v_u\) to obtain \(\mathrm{Retrieve}^M(v_u)=(p(v_u),\mathrm{Out}(v_u))\).
From \(p(v_u)\), read the next-bit value \(b\in\{0,1\}\) (which equals \(u^\star_{t+1}\) when \(u\) is the length-\(t\)
prefix of \(u^\star\)).
\item Move to the corresponding child: set \(u \leftarrow u b\), i.e., go to \(v_{u0}\) if \(b=0\) and to \(v_{u1}\) if
\(b=1\).
\end{enumerate}
After \(d\) steps we have reached the target leaf \(v_{u^\star}\). Finally retrieve \(v_{u^\star}\)  to read its payload, and output \(y\). This uses at most \(d+1\) retrievals and is
correct for every sample \((M,x,y)\sim D_{n,d}\).
\end{proof}

\subsection{Passive retrieval has irreducible error unless the budget is exponential}

\begin{lemma} \label{lem:passive}
For any passive policy with budget \(T\), uniformly over all parameters \(\theta\),
\[
L\!\left(\pi^{\mathrm{pass}}_\theta;\, D_{n,d}\right)\ \ge\ \varepsilon_Y\!\left(1-\frac{T}{2^d}\right),
\qquad\text{where}\qquad
\varepsilon_Y := 1 - \sup_{y\in\mathcal{Y}} P_Y(y).
\]
\end{lemma}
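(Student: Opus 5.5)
The plan is to condition on whether the passive policy ever retrieves the target leaf $v_{u^\star}$, and to exploit the fact that the passive query sequence depends only on $x$. The query $x$ is the same for every sample: it identifies $v_\emptyset$ and asks for the label. Hence, for fixed $\theta$, the retrieved nodes $v^{(1)},\dots,v^{(T)}$ form a deterministic set $S_\theta$ of at most $T$ node identifiers. This set is independent of $u^\star$ and $y$. If $Q^{\mathrm{pass}}$ uses internal randomness, the same argument applies conditionally on that randomness, which is independent of $(u^\star,y)$, and we then average.

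\textbf{Step 1 (the event $E$).} Let $E=\{v_{u^\star}\in S_\theta\}$. The set $S_\theta$ contains at most $T$ nodes, so it contains at most $T$ leaves. Because $u^\star$ is uniform on the $2^d$ leaves and independent of $S_\theta$, we get $\Pr[E]\le T/2^d$, and therefore $\Pr[E^c]\ge 1-T/2^d$. (If $T\ge 2^d$ the bound is trivial.)

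\textbf{Step 2 (no information about $y$ on $E^c$).} On $E^c$, every retrieved $z^{(t)}=\mathrm{Retrieve}^M(v^{(t)})$ is the payload and out-list of a non-target node. By construction of $D_{n,d}$, these are independent of $y$. This gives the core claim: conditioned on $E^c$, the tuple $(x,z^{(1)},\dots,z^{(T)})$ is independent of $y$, and $y\sim P_Y$ still holds.

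\textbf{Step 3 (bound the error on $E^c$).} Given the claim, the prediction $H_\theta(x,z^{(1)},\dots,z^{(T)})$ on $E^c$ is a (possibly randomized) function of variables independent of $y$. For any fixed output $\hat y$ we have $\Pr[\hat y=y\mid E^c,\ z]=P_Y(\hat y)\le\sup_y P_Y(y)$, so the conditional error is at least $\varepsilon_Y$. Combining,
\[
L(\pi^{\mathrm{pass}}_\theta;D_{n,d})\ \ge\ \Pr[E^c]\,\varepsilon_Y\ \ge\ \varepsilon_Y\Bigl(1-\tfrac{T}{2^d}\Bigr),
\]
uniformly in $\theta$.

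\textbf{Main obstacle: Step 2.} The construction only asserts that non-target payloads are independent of $y$, but we need joint independence of all the retrieved information and $y$ given $E^c$. Conditioning on $E^c$ also involves $u^\star$. The first approach I will try is to note that $y$ is sampled independently of $u^\star$ and of all non-target payloads and out-lists, and that $E^c$ is a function of $u^\star$ (with $S_\theta$ fixed). Then, for each fixed $u^\star\notin S_\theta$, the retrieved data is a function of quantities independent of $y$. Averaging over such $u^\star$ preserves independence, since $y$ is also independent of $u^\star$. I will state this reading of ``all other returned information is independent of $y$'' explicitly as the formal meaning of the construction.

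Finally, to obtain Theorem~\ref{thm:main}, take $d=T-1$, so that $T/2^d=T/2^{T-1}<1$ for $T\ge3$, and choose $P_Y$ non-degenerate so that $\varepsilon_Y>0$. This separates passive policies from the zero-error active policy of Lemma~\ref{lem:active}. The boundary case $T=2$ gives $T/2^d=1$ and would need separate care; this is the step whose details I expect to fix afterwards.
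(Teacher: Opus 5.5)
Your proof is correct and follows the paper's argument. The passive query set is determined by $x$ alone and is therefore independent of $u^\star$, so the target leaf is hit with probability at most $T/2^d$; on the miss event the transcript carries no information about $y$, which forces error at least $\varepsilon_Y$. Your explicit handling of the conditioning on $E^c$ (using that $y$ is independent of $u^\star$) and of randomized query rules makes rigorous what the paper asserts in one line.

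Your closing remark about $T=2$ is also accurate: with $d=T-1$, the bound $\varepsilon_Y(1-T/2^{T-1})$ vanishes there, which is a gap in the paper's proof of the theorem. It does not affect this lemma.
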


\begin{proof}
Fix \(\theta\). A passive policy chooses all nodes \(v^{(1)},\dots,v^{(T)}\) to retrieve as functions of \(x\) alone,
before observing any retrieved values.

Under \(D_{n,d}\), the target leaf index \(u^\star\in\{0,1\}^d\) is uniform and is not revealed by \(x\). Let
\(S:=\{v^{(1)},\dots,v^{(T)}\}\) be the (multi)set of retrieved nodes, and let \(L_d:=\{v_u:\, u\in\{0,1\}^d\}\) be the
set of leaves. Since only leaves can equal the target \(v_{u^\star}\),
\[
\Pr\!\left[v_{u^\star} \in S\right]
\;=\;
\Pr\!\left[v_{u^\star} \in S\cap L_d\right]
\;=\;
\frac{|S\cap L_d|}{2^d}
\;\le\;
\frac{T}{2^d}.
\]

The label \(y\) is encoded only in the target payload \(p(v_{u^\star})\). Therefore, the policy can obtain information
about \(y\) only if it retrieves \(v_{u^\star}\). Otherwise, the
entire retrieval transcript is independent of \(y\). In that case, the best possible prediction depends
only on the prior \(P_Y\), and must incur error at least \(\varepsilon_Y\). Therefore,
\[
L\!\left(\pi^{\mathrm{pass}}_\theta;\, D_{n,d}\right)
\ \ge\ \Pr[\text{miss target}] \cdot \varepsilon_Y
\ \ge\ \left(1-\frac{T}{2^d}\right)\varepsilon_Y.
\]
\end{proof}

\subsection{Strictness and approximation-theoretic separation}

\begin{proof}[Proof of Theorem~C.5]
Fix any \(T\ge 2\) and set \(d:=T-1\). The inclusion
\(\mathcal{H}^{\mathrm{LM}}_{\mathrm{passive}}(T)\subseteq \mathcal{H}^{\mathrm{LM}}_{\mathrm{active}}(T)\) follows by
taking any passive policy and viewing it as an active policy whose query functions ignore the history and depend only on
\(x\) (Lemma~\ref{lem:power}).

For strictness, consider the separating distribution \(D_{n,d}\) from Definition~C.7. By Lemma~\ref{lem:active},
\(\mathrm{opt}(\mathcal{H}^{\mathrm{LM}}_{\mathrm{active}}(T);D_{n,d})=0\) (since \(T=d+1\)). By Lemma~\ref{lem:passive},
\[
\mathrm{opt}(\mathcal{H}^{\mathrm{LM}}_{\mathrm{passive}}(T);D_{n,d})
\ \ge\ \varepsilon_Y\!\left(1-\frac{T}{2^d}\right)
\ =\ \varepsilon_Y\!\left(1-\frac{T}{2^{T-1}}\right).
\]
If the two hypothesis classes were equal, they would have equal \(\mathrm{opt}(\cdot;D)\) for every distribution \(D\),
contradicting the strict gap above. Hence the inclusion is strict for every \(T\ge 2\).
\end{proof}

\section{Experiments}
\label{app:experiments}
\subsection{Datasets}

\textbf{LoCoMo}~\cite{DBLP:conf/acl/MaharanaLTBBF24}.
LoCoMo is a benchmark designed to evaluate long-term conversational memory in extended dialogue settings.
The dataset consists of 50 conversations generated via a human--LLM pipeline.
Each conversation spans up to 35 sessions, with an average length of approximately 300 turns, and is accompanied by roughly 200 question--answer pairs.
LoCoMo includes multiple question categories, covering single-hop, multi-hop, temporal, and open-domain queries, which require retrieving and integrating information from distant parts of the conversation history.
In our experiments, we exclude adversarial questions, as most baseline methods do not support this setting and the task primarily evaluates the ability to detect unanswerable queries rather than memory reconstruction or multi-hop reasoning.

\textbf{LongMemEval}~\cite{DBLP:conf/iclr/WuWYZCY25}.
LongMemEval is a benchmark designed to evaluate very long-term memory capabilities of chat assistants under sustained user--assistant interactions.
Each evaluation instance consists of a sequence of timestamped chat sessions, followed by a question that requires recalling and reasoning over the interaction history.
We adopt the LongMemEval-S setting, which contains approximately 500 questions, each paired with a chat history of around 115K tokens.
This setting provides a challenging evaluation of long-term memory. In our experiments, we primarily focus on the single-session-user multi-session, temporal-reasoning and  single-session-preference question types.

\subsection{Baselines}
\textbf{Retrieval-Augmented Generation.}
Retrieval-augmented generation(RAG) stores dialogue histories as text chunks in a vector database.
For each query, the system retrieves the top-$k$ most similar memory entries based on embedding similarity and concatenates them with the query to form the input context.

\textbf{A-Mem}~\cite{DBLP:journals/corr/abs-2502-12110}.
A-Mem constructs structured memory notes for each conversational episode, where new memory notes are linked to existing ones using LLM-assisted analysis, resulting in a directed graph structure.
For retrieval, A-MEM embeds the query to retrieve relevant memory notes and then expands to their neighboring nodes.

\textbf{MemoryOS}~\cite{DBLP:journals/corr/abs-2506-06326}.
MemoryOS organizes agent memory into a three-tier hierarchy, including short-term memory for recent dialogue context, mid-term memory for topic-based interaction history, and long-term persona memory for stable user and agent characteristics.
For retrieval, it performs hierarchical memory access across all layers and the retrieved memories are concatenated with persona information to form the input context.

\textbf{LangMem}~\cite{langmem2025}.
LangMem stores conversational memories by embedding each dialogue turn and maintaining them in a vector database via a memory management tool.
At inference time, it retrieves the top-$k$ most relevant memories based on embedding similarity and summarizes them using the LLM to support response generation.

\textbf{Mem0}~\cite{DBLP:journals/corr/abs-2504-19413}.
Mem0 incrementally extracts salient natural-language facts from conversations and maintains a compact long-term memory through LLM-driven update operations.
For retrieval, it performs similarity-based search over stored memories and incorporates the retrieved facts into the input context for answer generation.

\subsection{Evaluation Metric}

\textbf{LLM-Judge.}
We adopt an LLM-based judge to evaluate the semantic correctness of generated answers.
Given a model-generated answer $\hat{y}$ and a reference answer $y$, the judge determines whether $\hat{y}$ is semantically equivalent to $y$, allowing for paraphrasing and differences in surface form.
The judge outputs a binary decision indicating correctness.

\textbf{F$_1$ Score.}
In addition to Recall, we report the F$_1$ score computed from the judge decisions.
For each question, the generated answer is treated as a positive prediction, and the judge decision determines whether it is a true positive or a false positive.
Aggregating over the dataset, precision and recall are computed as:
\begin{equation}
\mathrm{Precision}
=
\frac{\sum_i J(\hat{y}_i, y_i)}{\sum_i \mathbb{I}[\hat{y}_i \neq \emptyset]},
\qquad
\mathrm{Recall}
=
\frac{\sum_i J(\hat{y}_i, y_i)}{N},
\end{equation}
and the F$_1$ score is defined as:
\begin{equation}
\mathrm{F}_1
=
\frac{2 \cdot \mathrm{Precision} \cdot \mathrm{Recall}}
{\mathrm{Precision} + \mathrm{Recall}}.
\end{equation}
This formulation penalizes both incorrect answers and failures to produce an answer, and is particularly suitable for open-ended question answering settings.

\textbf{Evidence Recall (Recall).}
Evidence Recall measures the fraction of ground-truth supporting evidence that is successfully retrieved during memory reconstruction.
For each question $i$, let $\mathcal{E}_i^{\ast}$ denote the set of annotated ground-truth evidence items, and let $\hat{\mathcal{E}}_i$ denote the set of evidence items retrieved by the agent.
Evidence Recall is computed as:
\begin{equation}
\mathrm{Recall}
=
\frac{1}{N}
\sum_{i=1}^{N}
\frac{|\hat{\mathcal{E}}_i \cap \mathcal{E}_i^{\ast}|}
{|\mathcal{E}_i^{\ast}|},
\end{equation}
where $N$ is the total number of evaluation questions.
This metric reflects the effectiveness of the retrieval process in recovering relevant supporting evidence, independent of final answer generation.

\subsection{Implementation}
We use GPT-4o-mini as the LLM judge with temperature set to 0.0. Each method is evaluated three independent times, and we report the mean and standard deviation of the judge scores across runs.
To ensure comparable compute budgets across methods, we cap the agent’s reasoning to at most 8 turns per query, and allow up to 10 tool invocations per turn. Agents may terminate early if a stopping condition is met before exhausting the budget.

\subsection{Detailed Results on \textsc{LongMemEval}}
Table~\ref{tab:longmem-f1-j-overall} presents detailed results on \textsc{LongMemEval} under different evaluation settings, evaluated by F$_1$ and LLM-Judge.
\label{app:longmem}
\begin{table*}[t]
\centering
\small
\setlength{\tabcolsep}{6pt}
\caption{Performance on \textsc{LongMemEval} evaluated by F$_1$ and LLM-Judge~$\uparrow$.
Best and second-best results among Gemini-backbone methods
are marked in \textbf{bold} and \underline{underline}, respectively. MRAgent$^{*}$ uses Claude for retrieval while memories are constructed by Gemini.}
\begin{tabular}{l|cc|cc|cc|cc|c}
\toprule
Method
& \multicolumn{2}{c|}{multi-session}
& \multicolumn{2}{c|}{single-user}
& \multicolumn{2}{c|}{temporal-reasoning}
& \multicolumn{2}{c|}{single-preference}
& Overall \\
& F$_1$ $\uparrow$ & J $\uparrow$
& F$_1$ $\uparrow$ & J $\uparrow$
& F$_1$ $\uparrow$ & J $\uparrow$
& F$_1$ $\uparrow$ & J $\uparrow$
& J $\uparrow$ \\
\midrule
RAG
& 45.00 & 54.89
& 77.94 & 85.71
& 43.88 & 42.86
& 5.42  & 33.33
& 54.65 \\

A-Mem
& 28.26 & 42.85
& 75.30 & \underline{90.00}
& 34.18 & 45.11
& 9.09  & 46.43
& 52.98 \\

MemoryOS
& \underline{45.02} & \underline{56.39}
& \underline{79.50} & 87.14
& \underline{35.67} & 38.35
& \underline{9.53}  & \underline{46.67}
& \underline{54.92} \\

LangMem
& 41.14 & 52.63
& 72.43 & 78.57
& 36.80 & \underline{45.71}
& 6.32  & 36.67
& 53.77 \\

Mem0
& 37.53 & 50.38
& 72.43 & 78.57
& 35.64 & 45.11
& 5.89  & 40.00
& 53.01 \\

MRAgent
& \textbf{49.92} & \textbf{68.42}
& \textbf{80.99} & \textbf{92.85}
& \textbf{50.16} & \textbf{68.42}
& \textbf{23.96} & \textbf{66.67}
& \textbf{72.95} \\
\midrule
MRAgent$^{*}$
& 66.31 & 86.46
& 82.41 & 92.85
& 60.10 & 85.71
& 15.58 & 78.57
& 86.76 \\
\bottomrule
\end{tabular}
\label{tab:longmem-f1-j-overall}
\end{table*}

\subsection{Budget Sensitivity Analysis of Multi-step Reconstruction}
\begin{figure}[t]
    \centering
    \includegraphics[width=0.3\linewidth]{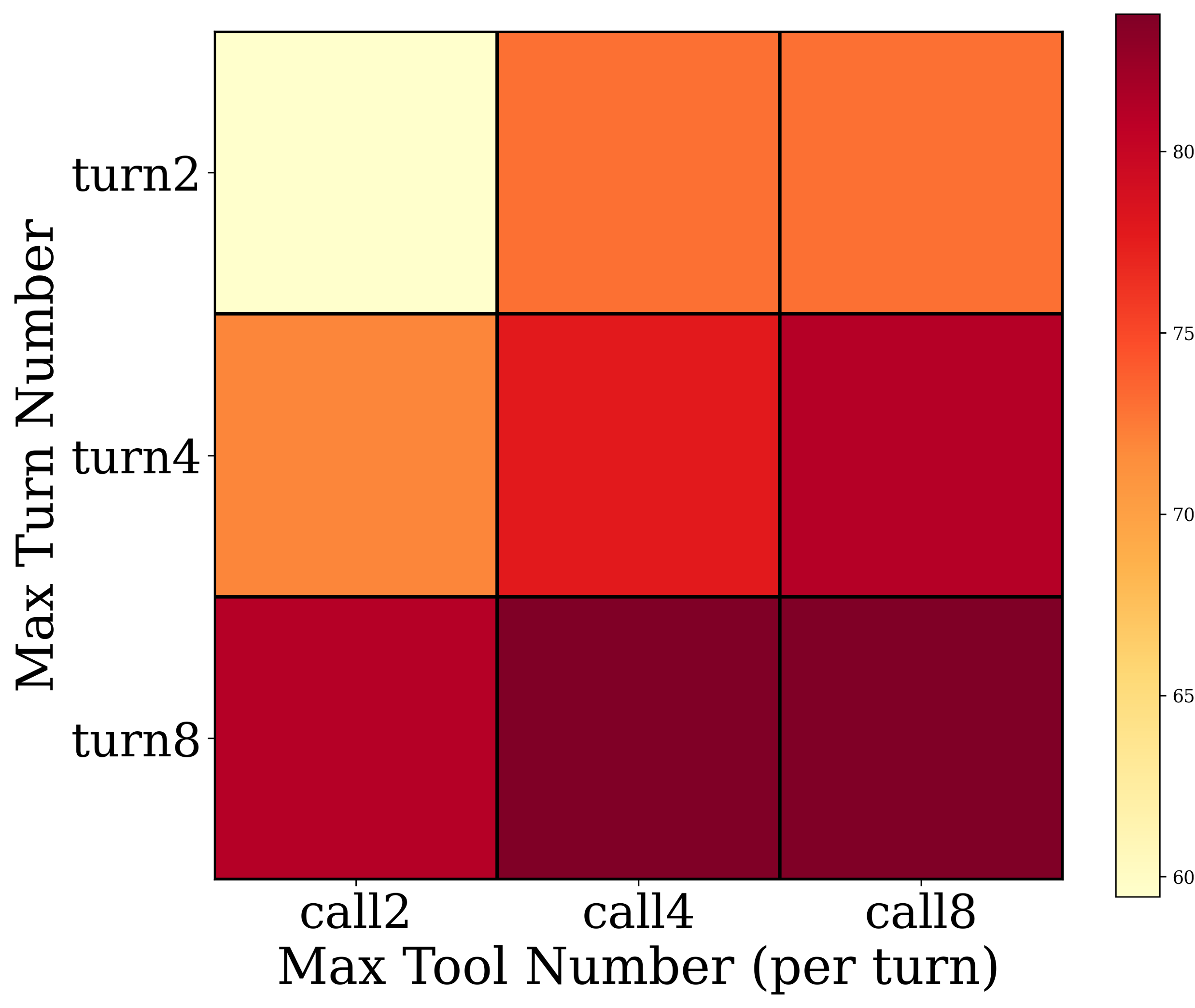}
    \caption{
Performance on multi-hop queries in \textsc{LoCoMo} as a function of the
number of reasoning turns ($T$) and the per-round retrieval budget ($K$),
evaluated under the Claude backbone using LLM-Judge (J).
}
    \label{fig:budget-heatmap}
\end{figure}

\label{sec:budget-analysis}

Figure~\ref{fig:budget-heatmap} analyzes the trade-off between the number of
reasoning turns and per-turn parallel retrieval on multi-hop questions in
\textsc{LoCoMo}.
We vary the per-turn retrieval budget ($K$), corresponding to the maximum
number of parallel tool calls allowed within a single reasoning turn, and
the maximum number of reasoning turns ($T$), while keeping other settings
fixed.

\textbf{Reconstruction depth cannot be substituted by increased parallel exploration.}
As the number of reasoning turns $T$ increases, accuracy improves steadily and monotonically across all values of $K$, with deeper reconstruction yielding substantially higher performance.
In contrast, increasing the per-turn retrieval budget $K$ leads to only limited gains that quickly saturate.
These results indicate that while parallel exploration increases retrieval breadth within a single reasoning turn, it cannot replace the sequential composition of evidence enabled by multi-turn reconstruction.

\subsection{Evidence Coverage by Retrieval Operators}

\begin{table*}[t] \centering \small \setlength{\tabcolsep}{6pt} 
\caption{Evidence coverage of different tools across question types on
\textsc{LoCoMo} under the Claude backbone.}
\begin{tabular}{>{\raggedright\arraybackslash}p{4.2cm}cccc} \toprule \textbf{Tool} & \textbf{Multi-hop} & \textbf{Temporal} & \textbf{Open domain} & \textbf{Single-hop} \\ \midrule \texttt{query\_tag\_events} & 66.33 & 81.08 & 41.18 & 74.76 \\ \texttt{query\_conversation\_time} & 4.08 & 86.49 & 17.65 & 3.88 \\ \texttt{query\_event\_context} & 18.37 & 32.43 & 35.29 & 33.01 \\ \texttt{query\_personal\_aspect} & 21.43 & 2.70 & 29.41 & 5.83 \\ \texttt{query\_topic\_events} & 33.67 & 45.95 & 35.29 & 27.18 \\ \bottomrule \end{tabular} \label{tab:tool-coverage-claude} \end{table*}

To analyze how different retrieval operators contribute to memory reconstruction,
we examine the evidence coverage of each tool on \textsc{LoCoMo}, aggregated by question category.
Table~\ref{tab:tool-coverage-claude} reports the coverage rates of individual operators, reflecting their functional roles during reconstruction.

\textbf{Different retrieval operators specialize in distinct query structures.}
Temporal questions are predominantly resolved through \texttt{query\_conversation\_time}, which accounts for the majority of temporally grounded evidence.
Multi-hop questions rely heavily on \texttt{query\_tag\_events} and \texttt{query\_topic\_events}, indicating that associative expansion over tags and topics is essential for recovering evidence distributed across multiple episodes.
In contrast, open-domain questions exhibit more balanced coverage across multiple operators, reflecting the need to integrate episodic, semantic, and contextual information.
Overall, these results demonstrate that MRAgent performs structured, operator-dependent memory reconstruction.
Rather than relying on uniform or similarity-driven retrieval, the agent selectively activates different operators based on query structure, resulting in differentiated evidence acquisition patterns.

\subsection{Case Studies}
\label{app:case_study}

As shown in Figure~\ref{fig:case_study}, we present a case study illustrating how
MRAgent performs multi-turn memory reconstruction over a structured memory
graph. 
The query, ``Which of Joanna’s screenplays were rejected by production
companies?'', requires linking screenplay submission events with subsequent
rejection events that are distributed across multiple dialogue sessions.

\textbf{MRAgent incrementally reconstructs relevant evidence through iterative
graph exploration and multi-turn reasoning.}
In the first reasoning turn, the agent retrieves screenplay submission and
rejection events by traversing tag-based associations, identifying candidate
events that are directly related to the query.
In the second turn, it expands these candidates by querying event-level context
and keywords to obtain detailed information about each rejection.
In the third and fourth turns, the agent queries semantic information about
Joanna to recover properties of her screenplays, enabling a clearer
characterization of each submission.
Finally, in the fifth turn, the agent queries temporal information to align
screenplay submissions with corresponding rejection events and verify their
ordering.

After five reasoning steps, MRAgent correctly infers that Joanna’s first and
third screenplays were rejected.
This example demonstrates that answering complex, multi-session queries
benefits from active, multi-step memory reconstruction that integrates
associative expansion with semantic and temporal verification.
\section{Prompt}

\begin{promptbox}{MRAgent QA and Tool-Use Prompt}
You are a question-answering agent with access to event-based memory.
Answer the question if sufficient evidence exists; otherwise, query memory tools to gather more information.

\textbf{Answer Rules:}
\begin{itemize}[leftmargin=1.25em,itemsep=2pt,topsep=2pt]
  \item Yes/No questions: output \texttt{Yes}, \texttt{No}, \texttt{Likely yes}, or \texttt{Likely no}.
  \item Location questions: answer with a specific place name.
  \item Counting questions: answer with the number of relevant items.
  \item Other questions: output the minimal concrete entity or phrase.
\end{itemize}

\textbf{Choose ONE mode:}

\textbf{(1) \texttt{"answer"}} — if evidence is sufficient, output:
\begin{lstlisting}[style=json]
{
  "mode": "answer",
  "answer": "...",
  "supports": ["D1:1","D1:2"],
  "confidence": 0.0-1.0
}
\end{lstlisting}

\textbf{(2) \texttt{"navigate"}} — if evidence is insufficient, call all relevant memory tools directly (no explanations).
\end{promptbox}

\begin{promptbox}{Keyword Extraction Prompt}
You are an information extraction system. \textbf{Only output valid JSON.}

\textbf{Task:}
For each input sentence, extract \textbf{2--30 keywords} \emph{directly from the original text}.
\begin{itemize}[leftmargin=1.25em,itemsep=2pt,topsep=2pt]
  \item Do not invent, paraphrase, or generalize keywords.
  \item Only include words or phrases that explicitly appear in the text.
  \item Do not include inferred or implicit concepts.
  \item Extract all explicit keywords of the following types if present:
  \texttt{entity}, \texttt{topic}, \texttt{verb}, \texttt{time}, \texttt{location}, \texttt{task}, \texttt{event}, \texttt{people}.
\end{itemize}

\textbf{ID Rule:}
\texttt{sentence\_id} must exactly match the \texttt{id} field in \texttt{TEXT}.  
Do not create or modify IDs.

\textbf{Output Schema (single-line JSON):}
\begin{lstlisting}[style=json]
{
  "sentence": [
    {
      "sentence_id": "D1:1-1",
      "keyword": ["Coraline", "park"]
    }
  ]
}
\end{lstlisting}
\end{promptbox}

\begin{promptbox}{Dialogue Processing Prompt}
You are a dialogue processor. \textbf{Only output valid JSON.}

\textbf{Task:}
For each sentence in the dialogue:
\begin{itemize}[leftmargin=1.25em,itemsep=2pt,topsep=2pt]
  \item Preserve every original sentence.
  \item Replace all pronouns with explicit entities or noun phrases from context.
  \item Do not modify verbs, adjectives, or other words.
  \item Assign a short concrete \texttt{tag} (at most two words).
  \item Normalize time to \texttt{YYYY-MM-DD} using \texttt{conversation\_time}.
  \item If a question is answered by the next sentence, merge them.
\end{itemize}

\textbf{Topics:}
Derive at least ten concrete topics overall.  
Assign topic IDs (\texttt{t1..tn}); each sentence lists applicable topics or \texttt{[]}.

\textbf{Personal Information:}
Extract person-related facts into \texttt{personal\_sentences}.  
If a fact appears in a sentence, also add a concise normalized version.

\textbf{ID Rules:}
\texttt{id} must be \texttt{origin:number}, where \texttt{origin} exactly matches \texttt{dia\_id}.  
Do not invent new IDs.

\textbf{Output Schema (single-line JSON):}
\begin{lstlisting}[style=json]
{
  "conversation_time": "YYYY-MM-DD",
  "sentence": [
    {
      "id": "D1:1-1",
      "text": "sentence.",
      "tag": "short tag",
      "origin": "D1:1",
      "topic": ["t1","t3"],
      "time": "YYYY-MM-DD"
    }
  ],
  "topics": {
    "t1": "Topic description",
    "t2": "Topic description"
  },
  "personal_sentences": [
    {
      "id": "p1",
      "text": "Normalized personal fact.",
      "tag": "preference",
      "origin": "D1:1",
      "person": "Name"
    }
  ]
}
\end{lstlisting}
\end{promptbox}

\end{document}